\documentclass{article}

\PassOptionsToPackage{round,authoryear}{natbib}
\PassOptionsToPackage{hypertexnames=false}{hyperref}
\usepackage[preprint]{neurips_2026}

\usepackage[utf8]{inputenc}
\usepackage[T1]{fontenc}
\usepackage{url}
\usepackage{booktabs}
\usepackage{amsfonts}
\usepackage{nicefrac}
\usepackage{microtype}
\usepackage{xcolor}

\usepackage{amsmath,amssymb,amsthm,mathtools}
\usepackage[inline]{enumitem}
\usepackage{placeins}
\usepackage{hyperref}

\newcommand{\R}{\mathbb{R}}
\newcommand{\calX}{\mathcal{X}}
\newcommand{\calU}{\mathcal{U}}
\newcommand{\calZ}{\mathcal{Z}}
\newcommand{\calO}{\mathcal{O}}

\newcommand{\calA}{\mathcal{A}}
\newcommand{\norm}[1]{\lVert #1 \rVert}
\newcommand{\sayan}[1]{{#1}}
\providecommand{\abs}[1]{\lvert #1 \rvert}
\usepackage{placeins}

\theoremstyle{plain}
\newtheorem{theorem}{Theorem}
\newtheorem{lemma}{Lemma}

\newtheorem{corollary}{Corollary}

\theoremstyle{definition}
\newtheorem{definition}{Definition}
\newtheorem{assumption}{Assumption}

\theoremstyle{remark}
\newtheorem{remark}{Remark}

\theoremstyle{plain}
\newtheorem*{theorem*}{Theorem}
\newtheorem*{lemma*}{Lemma}

\title{Factored Diffusion Policies:Compositionally 
\mbox{Generalized Robot Control
       with a Single Score Network}}

\author{%
  Sayan Mitra, Ege Yuceel, Noah Giles, Abhishek Pai \\
  University of Illinois Urbana-Champaign \\
  \texttt{mitras@illinois.edu} \\
}

\begin{document}
\maketitle

\begin{abstract}
Robotic tasks are typically specified by a tuple of factors, such as the object to be grasped, the obstacles to be avoided, the color of the target, and so on. Collecting expert demonstrations for every combination of factor values grows combinatorially.  We present \emph{factored diffusion policies}: a single shared diffusion network trained with per-factor null-token dropout, whose score decomposes additively across factors at inference.  Under approximate conditional independence between factors given the action--observation pair, this composition approximates the true joint score with a bounded uniform error, reducing the training-task budget from a product of factor cardinalities to a sum.  A trajectory-tube certificate chains this score-level bound through the reverse-time sampling ODE and a contracting tracking controller into a closed-loop state-trajectory tube whose radius factors into an ODE-sensitivity constant and a per-factor score-error budget. Unlike compositional-diffusion methods for control that combine separately trained networks, we use one shared  network.  Drone racing experiments confirm both the generalization bound and the certificate.  On state-based multi-gate racing, the factored policy passes $90\%$ of held-out gates---matching an oracle---while a $K$-network composition baseline collapses to $3\%$; on vision-based single-gate traversal, it transfers  zero-shot  to an
unseen venue with $+11.7$\,pp  success-rate gain and $2.4\times$ crash-rate reduction.
\end{abstract}

\section{Introduction}\label{sec:intro}


A standard approach to robot policy learning treats the task as
a monolithic input $z$ and trains a single policy network
$p(a \mid z)$ on a finite set of training tasks
$\calZ_{\mathrm{train}}$ by collecting expert demonstrations for
each $z \in \calZ_{\mathrm{train}}$.  But the task usually has
structure.  A manipulation task is specified by the object to be
grasped, its color, and the target placement pose; a locomotion
task by the obstacles to be avoided and the goal; a  racing
task by the track geometry and the gate sizes.  A task
with $K$ such factors is a tuple $z = (z_1, \ldots, z_K)
\in \calZ_1 \times \cdots \times \calZ_K$, where $\calZ_i$ is
the set of values factor $i$ can take. Thus, the cost of
monolithic conditioning grows as $\prod_i |\calZ_i|$ even when
factors vary independently.

The cost of monolithic conditioning is unavoidable in general,
but it shrinks dramatically when the factors are
\emph{approximately conditionally independent given an
action--observation pair $(a, o)$}: knowing $(a, o)$ and the
value of one factor does not appreciably help infer the others
(Assumption~\ref{asm:approx-indep}).  For example, for a
generalizable drone racing policy, the track and the
gate-opening size can be expected to be approximately
conditionally independent given the drone state ($o$) and a
planned action chunk ($a$) --- the spatial route of the
planned trajectory picks out the track while its speed and
precision pick out the gate size.
Under this assumption,
each factor's effect on the policy can be learned independently
and combined post-hoc, dropping the training-task budget from
$\prod_i |\calZ_i|$ to $O\bigl(\sum_i |\calZ_i|\bigr)$ and
giving compositional generalization to unseen factor
combinations.  The remainder of the paper turns this idea
into a constructive method for diffusion policies, an approximation bound, and a
closed-loop certificate.

\paragraph{A single diffusion network with approximate conditional independence of factors.}
Diffusion policies are a leading method for learning control
policies from expert demonstrations~\citep{chi2023diffusion,
reuss2024mdt}: they are trained to approximate the \emph{score}
of $p(a \mid o, z)$, the gradient $\nabla_a \log p$ in action
space.  The crucial property is that density factorization
under conditional independence becomes \emph{addition} in score
space, and \emph{approximate} conditional independence becomes
a bounded norm on the gradient of the residual factor
interaction (Theorem~\ref{thm:decomp}).  If the factors are
approximately independent given $(a, o)$, then
\[
  \nabla_a \log p(a \mid o, z_1, \ldots, z_K)
  \;\approx\; s_\varnothing + \textstyle\sum_{i=1}^{K} \Delta_i,
\]
where $s_\varnothing$ is the unconditional score and $\Delta_i$ is the score correction
for factor $i$ alone (Definition~\ref{def:corrections}).
This additive composition follows from the log-density structure;  in contrast, a behavior-cloning policy outputs a point estimate, and does not admit such a decomposition.
We exploit additive composition by
training a \emph{single shared} score network with per-factor
null-token dropout, so that each $\Delta_i$ is learned from
data covering only one factor at a time, and at inference we
compose $s_{\mathrm{comp}} = s_\varnothing + \sum_i \Delta_i$
(Definition~\ref{def:composed}).  This identity is the
algebraic backbone of classifier-free guidance~\citep{ho2022cfg}
and compositional image
generation~\citep{liu2022composable, du2023reduce}, where it
composes a single concept or text-prompt factor with the
unconditional score.  Two things
differ in our setting.  First, the policy runs in closed loop:
the composed score field denoises samples into actions, which
a tracking controller executes on the plant, and what
ultimately matters is a task-level event such as reaching a
goal or gate passage.  Second, we quantify the gap when
conditional independence holds only approximately:
Theorem~\ref{thm:decomp} bounds
$\norm{s - s_{\mathrm{comp}}}$ by $2\sqrt{GM}$ uniformly in
$z$, where $G$ bounds the residual factor-interaction strength
(Assumption~\ref{asm:approx-indep}) and $M$ bounds the
curvature of the factor-interaction log-ratio.  Because this
bound holds for every $z \in \prod_i \calZ_i$, including
factor combinations absent from training, and $s_{\mathrm{comp}}$
needs only the unconditional score plus one per-factor
correction, training data in which every factor value appears
at least once suffices: $O(\sum_i |\calZ_i|)$ tasks against
$\Omega(\prod_i |\calZ_i|)$ for a joint model.

\paragraph{Scores also make certification possible.}
The per-factor structure also yields a closed-loop guarantee.
Fix a nominal training task $z^{\mathrm{nom}}$ and let the
\emph{nominal trajectory} be the closed-loop state trajectory
the system would follow under the true joint score on
$z^{\mathrm{nom}}$.  Under (i)~bounded factor-interaction
(Assumption~\ref{asm:approx-indep}), (ii)~per-factor Lipschitz
score (Assumption~\ref{asm:lip-score}), and (iii)~a
contracting tracking controller
(Assumption~\ref{asm:contraction}), the closed-loop trajectory
generated by the composed policy on a (possibly held-out) test
task stays within a tube of radius $R$ around the nominal
(Theorem~\ref{thm:tube}).  $R$ aggregates four sources: the
decomposition residual $2\sqrt{GM}$ from
Theorem~\ref{thm:decomp}, a score-matching error \sayan{$(2K-1)\eta$}
on training data, a per-factor extrapolation term
and a process
disturbance from sensor noise and unmodeled dynamics.  The
transfer from score-space to state-space error passes through
an ODE-sensitivity factor $C_{\mathrm{ode}}$, for which we
give \sayan{two} computable bounds following standard methods:
analytical Gr\"onwall (uniform over tasks) \sayan{and} a path-dependent
linear time-varying (LTV) bound using per-step Jacobians along
the nominal path.  If the
nominal trajectory  completes the task with a robustness margin of at least $R$, then 
 the realized trajectory under the composed policy does so as well.

\paragraph{Experimental validation with drone racing.}
We study the approach on two drone racing settings with different factors
and find the same compositional effect in both.  The
first, state-based full multi-gate races, uses two factors:
track identity $z_1$, the ordered sequence of gate positions
and orientations, and gate aperture $z_2$.  The second, vision-based single-gate traversal,
uses four factors: venue, gate color, gate ID, and approach
side.  Passing a gate requires the drone center to lie within
the gate's half-width $r$ of the center at the crossing time,
so the tube radius $R$ from Theorem~\ref{thm:tube} admits the direct pass/fail 
comparison $R < r$.
On the state-based benchmark ($8$ tracks $\times$ $3$ gate
sizes; $6$ held-out joint pairs), the factored model passes
\sayan{$94\%$} of $104$ gates overall and \sayan{$27/30$ ($90\%$)} of held-out
gates, \sayan{matching the oracle ($95\%$ overall, $90\%$ held-out)} and beating the
unfactored baseline at $17\%$.  The same compositional formula
evaluated with $12$ separately trained networks in the style of
\citet{wang2024poco, mao2025composing} collapses to $3\%$ on
held-out gates, identifying parameter sharing rather than the
additive formula as the mechanism.  On the
vision-based benchmark ($4 \times 3 \times 5 \times 2 = 120$
tasks), the factored model wins zero-shot transfer to a
never-seen venue by $+11.7$\,pp on success rate and
$2.4\times$ on crash rate, and holds a $3.2\times$ crash-rate
advantage in-distribution. 

\paragraph{Contributions.}
\begin{enumerate*}[label=(\roman*)]
  \item a factored diffusion policy realized as a single
        shared score network with per-factor null-token
        dropout, with a compositional generalization bound
        (Theorem~\ref{thm:decomp}) that reduces the
        training-task requirement from
        $\Omega(\prod_i |\calZ_i|)$ to
        $O(\sum_i |\calZ_i|)$;
  \item a factored trajectory-tube certificate
        (Theorem~\ref{thm:tube}) chaining score error through
        the reverse-time sampling ODE and the closed-loop
        tracking dynamics to a task-success criterion;
  \item empirical validation on state-based full multi-gate
        races and vision-based single-gate traversal
        (Section~\ref{sec:experiments}).
\end{enumerate*}

\vspace{-0.25cm}

\FloatBarrier
\section{Problem of Generalizable Control from Demonstrations}\label{sec:setup}

Consider a robot with state $x_t \in \calX \subseteq \R^n$ and
discrete-time dynamics
\begin{equation}\label{eq:dynamics}
  x_{t+1} = f(x_t, u_t),
\end{equation}
where $u_t \in \calU \subseteq \R^m$ is the control input
applied at step $t$.  
A {\em task\/} is specified by a tuple of factors
$z = (z_1, \ldots, z_K) \in \calZ_1 \times \cdots \times \calZ_K$.  Each task $z$ carries a
\emph{success predicate} $\phi_z$ over closed-loop trajectories
$x_{0:T}$ that determines whether the task is completed.
%
We assume an expert that generates demonstration data for a
finite training set
$\calZ_{\mathrm{train}} \subsetneq
 \calZ_1 \times \cdots \times \calZ_K$.
The expert produces a command $a_t \in \calA$ at each step, and
a low-level {\em tracking controller\/} $\kappa$ generates the final plant input
$u_t = \kappa(x_t, a_t)$.  We make the standard assumption that the closed-loop
system formed by $\kappa$ and $f$ is
contracting~\citep{lohmiller1998contraction} or incrementally
input-to-state stable~\citep{angeli2002lyapunov}, i.e., small
perturbations to $a_t$ produce bounded state deviations
(Assumption~\ref{asm:contraction}  
in Appendix~\ref{app:assumption-contraction}).
The command $a_t$ may be a single setpoint, a trajectory, or an entire plan.  
At each step the policy
also receives an observation $o_t \in \calO$, which may include
any subset of the robot state and a perceptual input (e.g., a
camera image). The factored decomposition
we develop in Section~\ref{sec:solution} is independent of the
observation modality. 
Expert demonstrations are available only for a finite training
set $\calZ_{\mathrm{train}}$
costing $O(|\calZ_{\mathrm{train}}|)$.  
For this paper, we assume the factors for the training tasks $z \in \calZ_{\mathrm{train}}$ 
are chosen independently randomly, that is, the training-task distribution factorizes as
$p_{\mathrm{gen}}(z) = \prod_{i=1}^{K} p_{\mathrm{gen}}(z_i)$.
This is a property of the data-collection protocol that holds by design.
  The goal is to design a
policy $\pi_\theta(o_t, z)$ that generalizes from
$\calZ_{\mathrm{train}}$ to the full product space
$\calZ_1 \times \cdots \times \calZ_K$, including tasks for
which no expert demonstration or training data exists.

\paragraph{Running example: drone racing.}
In  drone racing, a quadrotor has to autonomously pass through a gate using
onboard IMUs and a forward-facing RGB camera.  
The  state $x \in \R^{13}$ (position,
attitude quaternion, velocity, body angular rate) evolves under  control
$u \in \R^4$ (collective thrust, body torque) according to
Newton--Euler dynamics.  
 We study two
instantiations that share these dynamics but differ in factors
and observation: a \emph{state-based full-race} setting
($K = 2$: track $\times$ gate-opening size,
Section~\ref{sec:exp-v6}) and a \emph{vision-based single-gate}
setting ($K = 4$: venue, gate color, gate ID, approach side,
Section~\ref{sec:exp-sg}).
The tracking controller $\kappa$ is an $\mathrm{SE}(3)$ geometric  controller~\citet{lee2010geometric} with fixed gains.
The success predicate $\phi_z$ requires
the drone's center to pass within a gate half-width $r$ of
each gate center (in the case of full-race, in the prescribed order).
\sayan{In the state-based full-race setting, $r = r(z_2)$ varies with
the gate-opening factor $z_2$; in the
vision-based single-gate setting, $r$ is fixed.}
Every $z \in \calZ_{\mathrm{train}}$ requires careful hand-tuning of the trajectory and
a speed profile or training a specialized policy, 
or multiple expert demonstrations.

\begin{figure}[!t]
  \centering
  \includegraphics[width=\textwidth]{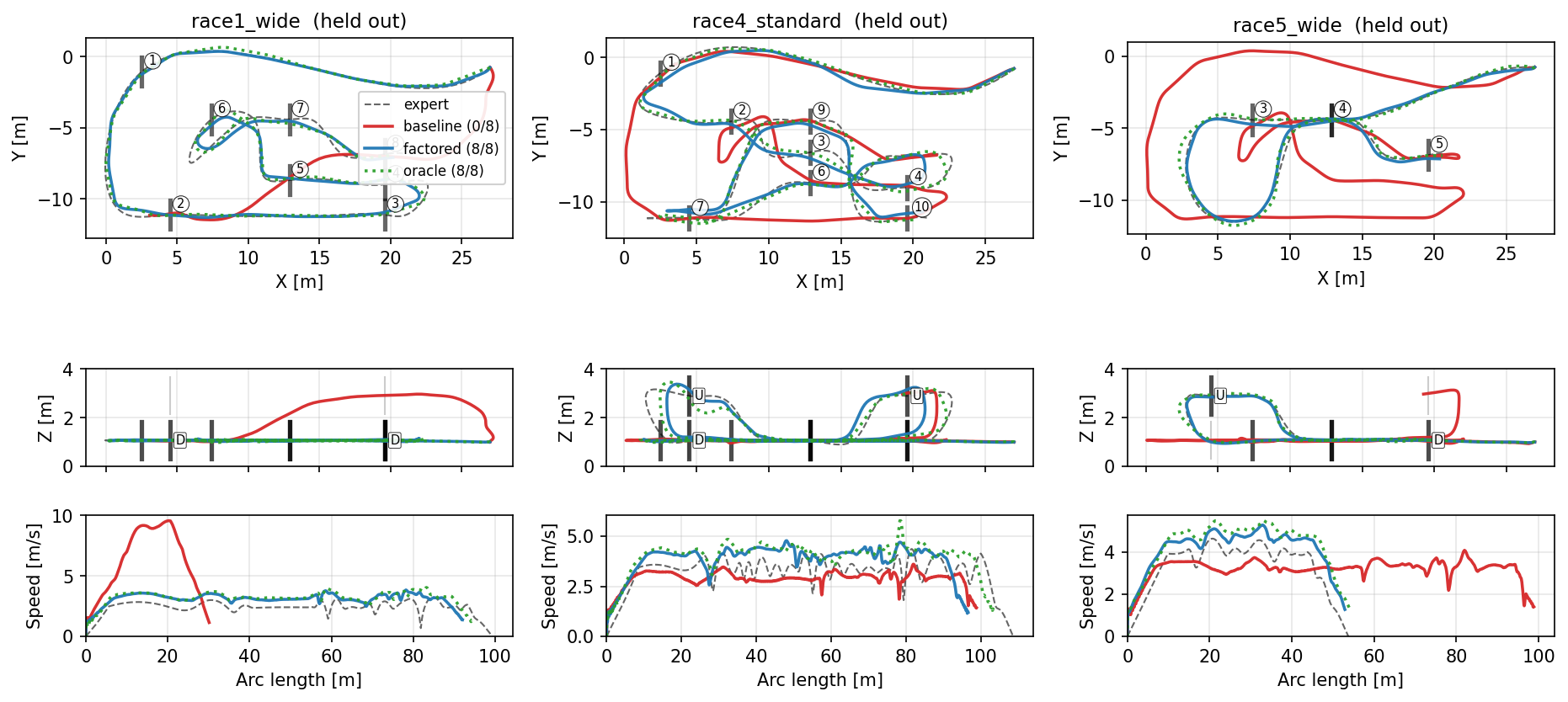}
  \caption{\small \textbf{Compositional generalization on
          held-out drone-racing tasks.}  Closed-loop trajectories
          on three (track, gate-size) pairs not seen during
          training; top-down ($X$--$Y$), side view ($X$--$Z$),
          and speed vs.\ arc length.  Black dashed: expert
          reference.  Green dotted: oracle (trained on the same
          pair).  Red: unfactored baseline.  Blue: factored
          compositional policy
          $s_{\mathrm{comp}} = s_\varnothing + \Delta_1 + \Delta_2$.
          The factored policy tracks expert and oracle to
          $\sim 0.1$\,m; the baseline diverges.  Aggregate over
          $20$ tasks: Section~\ref{sec:exp-v6}.}
  \label{fig:headline-trajectories}
\end{figure}


%



\section{Approach: Compositional Generalization via Conditional Independence}
\label{sec:solution}

Generalizing from $\calZ_{\mathrm{train}}$ to the full product
space $\calZ = \calZ_1 \times \cdots \times \calZ_K$ requires
additional structure: each factor's effect on the policy must
be learnable independently and combinable post hoc.
The key hypothesis for our approach to work is that the factors are
\emph{approximately conditionally independent given an
action--observation pair $(a, o)$}, that is, knowing $(a, o)$
and the value of one factor does not help infer the others
beyond what $(a, o)$ alone reveals.
For example, $z_1$ = track shape governs the direction in $a$ while
$z_2$ = gate aperture governs its speed --- two largely disjoint
channels in $(a, o)$, so neither factor reveals the other beyond
what $(a, o)$ already does.
Formally, let $p(a, o, z)$ denote the joint distribution over
action--observation--task tuples induced by the training data:
sample a task $z \sim p_{\mathrm{gen}}$, roll out the expert
policy on task $z$, and record each action--observation pair
produced along the trajectory.  \sayan{The forward-noising
process of Section~\ref{sec:diffusion}, $a_\sigma = \alpha_\sigma a + \sigma \epsilon$
with $\epsilon \sim \mathcal{N}(0, I)$ independent of $(a, o, z)$,
extends this to a noised joint $p_\sigma(a_\sigma, o, z)$ at every
$\sigma \in [0, \sigma_{\max}]$, with $\sigma = 0$ recovering the
clean joint.}  The conditional posterior
\sayan{$p_\sigma(z \mid a_\sigma, o)$} and the per-factor marginals
\sayan{$p_\sigma(z_i \mid a_\sigma, o)$} are obtained from
\sayan{the noised joint} via Bayes'
rule.  We measure the interaction among factors through the
\emph{interaction log-ratio}
\begin{equation}\label{eq:g}
  \sayan{g_\sigma(z, a_\sigma, o)
    \;:=\; \log \frac{p_\sigma(z \mid a_\sigma, o)}
                     {\prod_{i=1}^{K} p_\sigma(z_i \mid a_\sigma, o)},
  \qquad z = (z_1, \ldots, z_K),\;\;\sigma \in [0, \sigma_{\max}].}
\end{equation}
Conditional independence \sayan{at noise level $\sigma$} holds at
\sayan{$(a_\sigma,o)$} if and only if
\sayan{$g_\sigma(z, a_\sigma, o) = 0$} for all $z$.

\begin{assumption}[Bounded factor interaction]%
  \label{asm:approx-indep}
  \sayan{There exists $G \geq 0$ such that for every noise level
  $\sigma \in [0, \sigma_{\max}]$, every action--observation pair
  $(a_\sigma, o)$ in the support of $p_\sigma$, and every
  $z \in \calZ_1 \times \cdots \times \calZ_K$,
  $|g_\sigma(z, a_\sigma, o)| \leq G$.}
\end{assumption}

\vspace{-0.25cm}

\sayan{
Even when the training prior $p_{\mathrm{gen}}(z) = \prod_i p_{\mathrm{gen}}(z_i)$
factorizes, conditioning on the action $a$ can induce dependence between
factors (the collider effect).  It has been shown that conditional diffusion models can in general violate
strict conditional independence in practice~\citep{pal2024coind}.
Assumption~\ref{asm:approx-indep} quantifies the strength of such effects via $G$. 
In our experiments, the residual is small empirically: the decomposition error $\varepsilon_D$ at noised
states (Table~\ref{tab:exp-DL}, mean $1.70$) is an upper bound
on $2\sqrt{GM}$ and is much smaller than the dominant per-factor
sensitivities $L_i$, indicating that the factors operate on largely
disjoint physical channels.}


\subsection{Diffusion Policies, Denoising ODEs, and ODE Sensitivity}
\label{sec:diffusion}

Let $p(a \mid o, z)$ denote the conditional distribution over
commands $a \in \calA$ given observation $o$ and task $z$,
induced by the expert demonstrations.
We learn to approximate $p(a \mid o, z)$ with a diffusion model, which operates on
its \emph{score function}, the gradient of the log-density with
respect to the action.  The score is the operational object on which
the compositional structure can be implemented.

A diffusion model defines a forward noising process
$a_\sigma = \alpha_\sigma a_0 + \sigma \epsilon$ with
$\epsilon \sim \mathcal{N}(0,I)$ and
$\alpha_\sigma^2 + \sigma^2 = 1$ ($\sigma = 0$
is clean data, large $\sigma$ is near-Gaussian), where
$a_0 \sim p(a \mid o, z)$.  The score at noise level $\sigma$ is
\begin{equation}\label{eq:score}
  s(a_\sigma, \sigma, o, z)
    \;:=\; \nabla_{a_\sigma} \log p_\sigma(a_\sigma \mid o, z),
\end{equation}
where $p_\sigma$ is the marginal density of $a_\sigma$.  A
neural network $s_\theta$ approximates $s$ via denoising
score matching.  Actions are generated by solving the
reverse sampling ODE
\begin{equation}\label{eq:reverse-ode}
  \frac{da_\sigma}{d\sigma}
    = f_{\mathrm{dr}}(\sigma)\, a_\sigma
      + \tilde{g}(\sigma)\, s_\theta(a_\sigma, \sigma, o, z)
\end{equation}
backward from $\sigma_{\max}$ (near-Gaussian) to
$\sigma \approx 0$ (near-clean), where $f_{\mathrm{dr}}$ and
$\tilde{g}$ are schedule-dependent drift and diffusion
coefficients.  In practice the integration is performed with a
discrete-time scheduler such as DDIM (Denoising Diffusion
Implicit Models~\citealp{song2020denoising}), which under a
deterministic-seed setting realizes~(\ref{eq:reverse-ode}).
At inference on an unseen task the policy runs a \emph{composed
learned} score that differs from the true joint score, and we
need to bound how much this difference affects the generated
command.  The bound below is a standard application of
Gr\"onwall's inequality to~(\ref{eq:reverse-ode}); similar
arguments appear in convergence analyses for diffusion
samplers~\citep{song2021scoreode, chen2023sampling,benton2024linear}.  
Before stating it, we fix one mild setup choice: throughout the
sensitivity analysis, both denoising chains are run from a shared
initial sample $a_{\sigma_{\max}} = a_{\sigma_{\max}}' = a^\star$.
Since~\eqref{eq:reverse-ode} is a probability-flow ODE, fixing the
seed makes the sampled action a deterministic function of $(o, z)$
and yields a per-task certificate without quantifying over the
Gaussian cover.  Section~\ref{sec:exp-v6} verifies that the
conclusions transfer to fresh-seed-per-rollout sampling at
deployment.
%


\begin{lemma}[ODE Sensitivity]\label{lem:ode-sens}
  Let $s, s' : \calA \times [0, \sigma_{\max}] \to \calA$ be two
  score fields with
  (a) $\norm{s - s'}_\infty \leq \varepsilon$ and
  (b) $s$ is $L_a(\sigma)$-Lipschitz in its first argument.
  Let $\{a_\sigma\}, \{a_\sigma'\}$ be denoising paths generated
  by the reverse ODE~\eqref{eq:reverse-ode} from a shared initial
  noise $a_{\sigma_{\max}} = a_{\sigma_{\max}}'$.  Then
  \begin{equation}\label{eq:ode-bound}
    \norm{a_0 - a_0'}
      \;\leq\; C_{\mathrm{ode}} \cdot \varepsilon,
    \ \
    C_{\mathrm{ode}}
      \coloneqq \int_0^{\sigma_{\max}}
          |\tilde{g}(\sigma)|\,
          \exp\!\Bigl(
            \int_0^{\sigma}
              \bigl[|f_{\mathrm{dr}}(\sigma')|
                    + |\tilde{g}(\sigma')|\, L_a(\sigma')
              \bigr] d\sigma'
          \Bigr)\, d\sigma.
  \end{equation}
\end{lemma}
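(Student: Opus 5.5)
We will run a Gr\"onwall argument on the difference of the two denoising paths. Set $e_\sigma := a_\sigma - a_\sigma'$. The ODE is integrated backward from $\sigma_{\max}$ to $0$, and the chains share their initial noise, so $e_{\sigma_{\max}} = 0$. Subtracting the two instances of~\eqref{eq:reverse-ode} gives
\begin{equation*}
  \frac{de_\sigma}{d\sigma}
  = f_{\mathrm{dr}}(\sigma)\, e_\sigma
    + \tilde{g}(\sigma)\bigl[s(a_\sigma,\sigma) - s(a_\sigma',\sigma)\bigr]
    + \tilde{g}(\sigma)\bigl[s(a_\sigma',\sigma) - s'(a_\sigma',\sigma)\bigr].
\end{equation*}
The split is chosen so that Lipschitz continuity is needed only for $s$, matching hypothesis (b). We then bound the middle bracket by $L_a(\sigma)\norm{e_\sigma}$ and the last bracket by $\varepsilon$, using hypothesis (a).

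Next we integrate from $\sigma$ up to $\sigma_{\max}$ and use $e_{\sigma_{\max}}=0$. With $u(\sigma) := \norm{e_\sigma}$, this gives the integral inequality
\begin{equation*}
  u(\sigma) \le \int_\sigma^{\sigma_{\max}} \bigl[|f_{\mathrm{dr}}(\tau)| + |\tilde{g}(\tau)| L_a(\tau)\bigr] u(\tau)\, d\tau
  + \varepsilon \int_\sigma^{\sigma_{\max}} |\tilde{g}(\tau)|\, d\tau .
\end{equation*}
This avoids differentiating a norm. Alternatively, one can use the upper Dini derivative of $\norm{e_\sigma}$ in reversed time $t = \sigma_{\max} - \sigma$, where the ODE runs forward from a zero initial condition.

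Finally we apply the linear Gr\"onwall lemma in its variation-of-constants form, in which the forcing term is integrated against the propagator instead of being pulled out as a constant. Write $\beta(\tau) = |f_{\mathrm{dr}}(\tau)| + |\tilde g(\tau)| L_a(\tau)$. The comparison solution of $v'(t) = \beta\, v + \varepsilon|\tilde g|$ in reversed time gives
\begin{equation*}
  u(0) \le \varepsilon \int_0^{\sigma_{\max}} |\tilde g(\tau)| \exp\Bigl(\int_0^{\tau} \beta(\sigma')\, d\sigma'\Bigr) d\tau ,
\end{equation*}
which is exactly $C_{\mathrm{ode}}\,\varepsilon$ as defined in~\eqref{eq:ode-bound}.

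The main obstacle is bookkeeping rather than ideas. The integration runs backward, so the exponent must come out as $\int_0^{\tau}\beta$: an error injected at level $\tau$ is amplified while the chain travels from $\tau$ down to $0$. Getting the limits right requires the reversed-time substitution. We also need mild regularity so the comparison argument is valid, namely local integrability of $f_{\mathrm{dr}}$, $\tilde g$ and $L_a$, and existence of solutions. If $\tilde g$ or $f_{\mathrm{dr}}$ is singular near $\sigma = 0$, we would state the bound as holding whenever $C_{\mathrm{ode}}$ is finite. Nonnegativity of $L_a$ is implicit in hypothesis (b).
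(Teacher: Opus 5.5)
Your proposal is correct and follows the paper's proof essentially step for step. It uses the same error variable $e_\sigma = a_\sigma - a_\sigma'$ with $e_{\sigma_{\max}} = 0$, the same split of the score difference so that only $s$ needs to be Lipschitz, and the same backward, variation-of-constants Gr\"onwall step with exponent $\int_0^\tau$, giving exactly $C_{\mathrm{ode}}\,\varepsilon$; your integral-inequality (or reversed-time Dini-derivative) form is simply a more careful version of the paper's differential inequality $\norm{de/d\sigma} \leq \mu\norm{e} + |\tilde{g}|\,\varepsilon$, which tacitly differentiates a norm.
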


Lemma~\ref{lem:ode-sens} is conservative: the scalar Lipschitz
bound $L_a(\sigma)$ collapses all directional structure of the
score Jacobian, and the Gr\"onwall integral compounds the
worst case at every noise level.  Both losses are recoverable.
Linearizing the discrete DDIM
map~\eqref{eq:reverse-ode} around the nominal denoising
trajectory $\{\bar a_k\}$ yields a discrete state-transition
matrix $\Phi_{k+1,K}$ whose operator norm replaces the scalar
Lipschitz product, capturing both the anisotropy of the
per-step Jacobians $J_k = \partial s/\partial a|_{\bar a_k}$
and inter-step cancellation of worst-case
directions~\cite{FanM:EMSOFT2016}.  The resulting
\emph{path-dependent LTV} constant
$C_{\mathrm{ode}}^{\mathrm{LTV}}(\bar a)$
(Lemma~\ref{lem:ltv-sens})
is typically $5$ orders of magnitude tighter than the
 Gr\"onwall constant on our experiments
(Section~\ref{sec:exp-v6}).

\subsection{Factored Score Decomposition}\label{sec:factored}

\begin{definition}[Composed score]%
  \label{def:corrections}\label{def:composed}
  Let $z = (z_1, \ldots, z_K)$ be a task.  For each
  $i \in \{1, \ldots, K\}$, the \emph{score correction} for
  factor $i$ is
   $ \Delta_i(a_\sigma, \sigma, o, z_i)
      = s(a_\sigma, \sigma, o, z_i) - s(a_\sigma, \sigma, o)$,
  where $s(\cdot, z_i)$ is the score conditioned on factor $i$
  alone (other factors marginalized via null-token substitution)
  and $s(\cdot)$ is fully unconditional.  The \emph{composed
  score} for $z$ is
  \begin{equation}\label{eq:composed}
    s_{\mathrm{comp}}(a_\sigma, \sigma, o, z)
      = s(a_\sigma, \sigma, o)
        + \sum_{i=1}^{K} \Delta_i(a_\sigma, \sigma, o, z_i).
  \end{equation}
\end{definition}

\begin{theorem}[Decomposition error bound]\label{thm:decomp}
  Suppose Assumption~\ref{asm:approx-indep} holds and \sayan{for every
  $\sigma \in [0, \sigma_{\max}]$,} the interaction log-ratio
  \sayan{$g_\sigma$} defined in~\eqref{eq:g} is twice differentiable in
  \sayan{$a_\sigma$} with
  \sayan{$\norm{\nabla_{a_\sigma}^2\, g_\sigma(z, a_\sigma, o)}_{\mathrm{op}} \leq M$}
  uniformly \sayan{in $\sigma, a_\sigma, o, z$}.  Then for all $a_\sigma,\, \sigma,\, o,\, z$,
  \begin{equation}\label{eq:decomp-bound}
    \norm{s(a_\sigma, \sigma, o, z)
          - s_{\mathrm{comp}}(a_\sigma, \sigma, o, z)}
    \leq 2\sqrt{G M}.
  \end{equation}
\end{theorem}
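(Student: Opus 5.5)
Via Bayes' rule at noise level $\sigma$, I will rewrite the joint score $s(a_\sigma,\sigma,o,z)$ in terms of the factor posteriors. Then I identify the gap $s - s_{\mathrm{comp}}$ with $\nabla_{a_\sigma} g_\sigma$, and finally bound that gradient by interpolating between the sup bound $G$ of Assumption~\ref{asm:approx-indep} and the Hessian bound $M$.

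\textbf{Step 1 (score identity).} For fixed $o$, Bayes' rule gives $\log p_\sigma(a_\sigma\mid o,z) = \log p_\sigma(z\mid a_\sigma,o) + \log p_\sigma(a_\sigma\mid o) - \log p_\sigma(z\mid o)$. The last term does not depend on $a_\sigma$. Similarly, $\log p_\sigma(a_\sigma\mid o,z_i) = \log p_\sigma(z_i\mid a_\sigma,o)+\log p_\sigma(a_\sigma\mid o) - \log p_\sigma(z_i\mid o)$, where the single-factor conditional is exactly what null-token marginalization represents. Taking $\nabla_{a_\sigma}$ gives $\Delta_i = \nabla_{a_\sigma}\log p_\sigma(z_i\mid a_\sigma,o)$ and $s(\cdot,z) - s(\cdot) = \nabla_{a_\sigma}\log p_\sigma(z\mid a_\sigma,o)$. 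Subtracting $\sum_i\Delta_i$ and using~\eqref{eq:g} yields the exact identity
\[
s(a_\sigma,\sigma,o,z)-s_{\mathrm{comp}}(a_\sigma,\sigma,o,z)=\nabla_{a_\sigma} g_\sigma(z,a_\sigma,o).
\]

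\textbf{Step 2 (gradient from function and curvature bounds).} Write $h(a):=g_\sigma(z,a,o)$, so that $|h|\le G$ and $\norm{\nabla^2 h}_{\mathrm{op}}\le M$. Fix $a$ and let $v=\nabla h(a)$. If $v=0$ there is nothing to prove. Otherwise set $u=v/\norm{v}$ and take $t>0$. Taylor's theorem with the Hessian bound gives
\[
h(a+tu)\ge h(a)+t\norm{v}-\tfrac{M}{2}t^2,\qquad h(a-tu)\le h(a)-t\norm{v}+\tfrac{M}{2}t^2 .
\]
Subtracting the two gives $2t\norm{v}\le h(a+tu)-h(a-tu)+Mt^2\le 2G+Mt^2$, that is, $\norm{v}\le G/t + Mt/2$. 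Choosing $t=\sqrt{2G/M}$ gives $\norm{v}\le\sqrt{2GM}\le 2\sqrt{GM}$. Alternatively, a one-sided expansion $h(a-tu)\le h(a)-t\norm v+\tfrac M2t^2$ with $h(a)-h(a-tu)\le 2G$ gives $\norm v\le 2G/t+Mt/2$, and $t=2\sqrt{G/M}$ gives exactly $2\sqrt{GM}$, which matches the stated constant. The degenerate cases are easy: if $M=0$, then $h$ is affine and bounded, so $v=0$; if $G=0$, then $h\equiv0$.

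\textbf{Main obstacle.} The delicate point is that Step 2 needs the points $a\pm tu$ to lie where the bounds hold. Assumption~\ref{asm:approx-indep} is stated on the support of $p_\sigma$. For $\sigma>0$ that support is all of $\calA$ because of Gaussian convolution, so the argument goes through. At $\sigma=0$, or if the support is not all of $\calA$, I would instead assume the bounds on a convex domain, or obtain the $\sigma=0$ case by a limit $\sigma\downarrow0$ using continuity of the scores.

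A second point needs checking: whether $\nabla_{a_\sigma}$ of the single-factor log-posterior really corresponds to $\Delta_i$ as defined via null tokens. I take this as the intended meaning of $s(\cdot,z_i)$ as the score with the other factors marginalized under the noised joint.
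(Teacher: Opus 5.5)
Your proposal is correct and follows the paper's proof essentially step for step. You use the same Bayes-rule identity $s(\cdot,z)-s_{\mathrm{comp}}=\nabla_{a_\sigma}g_\sigma$, and your one-sided Taylor expansion along the unit gradient direction with $t=2\sqrt{G/M}$ is the paper's argument up to expanding backward instead of forward. Your central-difference variant is a small genuine sharpening to $\sqrt{2GM}$. Your caveat that the points $a\pm tu$ must lie where the bounds hold is a real point the paper leaves implicit: it is harmless for $\sigma>0$, where the Gaussian-convolved density has full support, but the $\sigma=0$ endpoint (a point mass for the paper's deterministic expert) does need the limiting or restricted-domain treatment you suggest.
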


The composed score $s_{\mathrm{comp}}$ approximates the true joint
score $s(\cdot, z)$ within $2\sqrt{G M}$ for \emph{every}
$z \in \calZ_1 \times \cdots \times \calZ_K$, including tuples absent from
$\calZ_{\mathrm{train}}$.  Computing $s_{\mathrm{comp}}$
requires $K + 1$ learned components
(Definition~\ref{def:corrections}): the unconditional score
$s(\cdot)$ and one per-factor score $s(\cdot, z_i)$ for each
factor $i$.  Learning $s(\cdot, z_i)$ requires training data
in which the value $z_i$ appears with at least one setting of
the other factors.  A training set of size
$|\calZ_{\mathrm{train}}| = O\bigl(\sum_i |\calZ_i|\bigr)$ in
which every factor value appears in at least one training task
therefore suffices to evaluate $s_{\mathrm{comp}}$ on all
$\prod_i |\calZ_i|$ tasks.  A joint (non-compositional) model
that learns $s(\cdot, z)$ directly requires
$\Omega\bigl(\prod_i |\calZ_i|\bigr)$ demonstrations.

The decomposition $s_{\mathrm{comp}} = s + \sum_i \Delta_i$ is
not fully determined by training data: a constant can shift between
$s$ and any $\Delta_i$ without changing $s_{\mathrm{comp}}$.  We
resolve this via \emph{factor dropout} during training, replacing
each factor conditioning independently with a learned null token
with probability $p_{\mathrm{drop}}$, so $s$ is identified as the
all-null behavior and each $\Delta_i$ as the residual when only
factor $i$ is revealed (Appendix~\ref{app:identifiability}).


\subsection{Trajectory Tube Certificate}\label{sec:certificate}

In our setting the expert controller is deterministic: for each
$(o, z)$ there is a unique reference chunk
$a_0^*(o, z) \in \calA$, so
$p(a_0 \mid o, z) = \delta(a_0 - a_0^*(o,z))$, where $\delta$
denotes the Dirac delta distribution (a point mass at $a_0^*$).
The noised density is then a Gaussian centered at
$\alpha_\sigma a_0^*$, and the true score has the closed form
  $s(a_\sigma, \sigma, o, z) = -\frac{a_\sigma - \alpha_\sigma\, a_0^*(o, z)}{\sigma^2}$.
This closed form makes
the score-matching error $\eta$~\eqref{eq:kappa} directly
measurable from training data.

Theorem~\ref{thm:tube} additionally needs a per-factor Lipschitz
regularity of the trained score with respect to each factor
$z_i$ (Assumption~\ref{asm:lip-score}); this is a
mild architectural property that supplies the per-factor
sensitivity constants $L_i$.
The remaining ingredient is the network's approximation error.
The score-matching error $\eta$ is a \emph{measured} quantity: 
the worst-case gap between $s_\theta$ and the
true score on the training set,
\begin{equation}\label{eq:kappa}
  \eta
    \;:=\;
    \sup_{\substack{a_\sigma,\, \sigma,\, o \\
                    c \,\in\, \calZ_{\mathrm{train}} \cup \{\varnothing\}}}
    \norm{s_\theta(a_\sigma, \sigma, o, c)
          - s(a_\sigma, \sigma, o, c)},
\end{equation}
where $c$ ranges over all conditioning configurations used in
the composed score (unconditional, each single-factor, and the
joint).  \sayan{The denoising score-matching training loss is an
unbiased estimate of the \emph{expected} squared score error
$\mathbb{E}[\norm{s_\theta - s}^2]$~\citep{song2021scoreode},
which by Jensen's inequality lower-bounds $\eta^2$.
Empirical sup-bounds on $\eta$ are
obtained by worst-case sampling over noised states drawn along
the nominal denoising trajectory; the per-combo
$\varepsilon_s^{(z)} := \max_k \norm{\Delta\hat\varepsilon_k^{(z)}}$
measurements reported in Appendix~\ref{app:fr:eps-s} are the
operational instantiation of this estimate.  A small training
loss is necessary but not sufficient for small $\eta$.}  \sayan{Expanding the
composed score (Definition~\ref{def:composed}) gives
$s^{\mathrm{comp}} = \sum_{i=1}^K s(\cdot, z_i) - (K-1)\, s_\varnothing$,
i.e., the unconditional baseline appears with coefficient $-(K-1)$.
The triangle inequality on $s_\theta^{\mathrm{comp}} - s^{\mathrm{comp}}$
therefore gives $K\eta + (K-1)\eta = (2K-1)\eta$:}
\begin{equation}\label{eq:composed-kappa}
  \norm{s_\theta^{\mathrm{comp}} - s^{\mathrm{comp}}}
    \leq \sayan{(2K-1)\eta}.
\end{equation}


\begin{theorem}[Factored trajectory tube]\label{thm:tube}
  Under
  Assumptions~\ref{asm:approx-indep},  \ref{asm:contraction}, and
  \ref{asm:lip-score}, 
  and either
  (a)~the $L_a(\sigma)$-Lipschitz condition  of
  Lemma~\ref{lem:ode-sens}, or
  (b)~the Jacobian regularity of Lemma~\ref{lem:ltv-sens}, 
  let $x_t^{\mathrm{nom}}$ be the closed-loop state trajectory
  under the \emph{true} joint score $s(\cdot, z^{\mathrm{nom}})$
  for a nominal task $z^{\mathrm{nom}} \in \calZ_{\mathrm{train}}$,
  and let $x_t$ be the trajectory under the
  \emph{composed learned} score
  $s_\theta^{\mathrm{comp}}(\cdot, z)$ with per-factor
  perturbations
  $\norm{z_i - z_i^{\mathrm{nom}}} \leq \delta_i$ for each
  $i \in \{1, \ldots, K\}$, starting from
  $\norm{x_0 - x_0^{\mathrm{nom}}} \leq \delta_0$.
  Let $\eta$ be the score-matching error~\eqref{eq:kappa},
  measured on $\calZ_{\mathrm{train}}$.  Then
  \begin{equation}\label{eq:tube}
    \norm{x_t - x_t^{\mathrm{nom}}}
      \leq \lambda^t\, \delta_0
           + \frac{B_\kappa\, C_{\mathrm{ode}}\,
                   \varepsilon_s + w}
                  {1 - \lambda},
  \end{equation}
  where the total score perturbation decomposes as
  $\varepsilon_s \leq 2\sqrt{GM} + \sayan{(2K-1)\eta}
   + \sum_{i=1}^{K} L_i\, \delta_i$,
  with terms from Theorem~\ref{thm:decomp}, the network approximation
  error~\eqref{eq:composed-kappa}, and the factor perturbation.
\end{theorem}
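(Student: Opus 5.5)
The plan is to chain three bounds: a score-space bound, then an action-space bound, then a state-space bound.

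\textbf{Score level.} Fix a time step $t$ and the observation $o_t$ seen by the composed policy. The score gap is $s_\theta^{\mathrm{comp}}(\cdot,z) - s(\cdot,z^{\mathrm{nom}})$, and I would split it with a telescoping sum:
\[
\bigl[s_\theta^{\mathrm{comp}}(\cdot,z)-s^{\mathrm{comp}}(\cdot,z)\bigr]
+\bigl[s^{\mathrm{comp}}(\cdot,z)-s(\cdot,z)\bigr]
+\bigl[s(\cdot,z)-s(\cdot,z^{\mathrm{nom}})\bigr].
\]
Each bracket is handled separately:
\begin{itemize}
\item The first bracket is at most $(2K-1)\eta$ by~\eqref{eq:composed-kappa}.
\item The second is at most $2\sqrt{GM}$ by Theorem~\ref{thm:decomp}, which holds uniformly in $z$, including held-out $z$.
\item For the third, I use Assumption~\ref{asm:lip-score}. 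I change one factor at a time, along the path $z^{\mathrm{nom}}\to(z_1,z_2^{\mathrm{nom}},\dots)\to\cdots\to z$, and apply the per-factor Lipschitz constant $L_i$ at each change. This gives $\sum_i L_i\delta_i$.
\end{itemize}
Adding the three brackets gives $\norm{s_\theta^{\mathrm{comp}}(\cdot,z)-s(\cdot,z^{\mathrm{nom}})}_\infty\le\varepsilon_s$, uniformly in $(a_\sigma,\sigma)$.

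This step is delicate. Assumption~\ref{asm:lip-score} is stated for the trained score, while the third bracket involves the true score. So I would either apply the Lipschitz bound to the composed per-factor fields, which is where the per-factor structure enters, or reorder the telescope so that the Lipschitz step lands on the learned $s_\theta$. With the second ordering the $\eta$ term is counted against the nominal joint/per-factor configurations in $\calZ_{\mathrm{train}}\cup\{\varnothing\}$.

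\textbf{Action level.} Both denoising chains start from the shared seed $a^\star$. Under hypothesis~(a) I apply Lemma~\ref{lem:ode-sens} with $s=s(\cdot,z^{\mathrm{nom}})$ and $s'=s_\theta^{\mathrm{comp}}(\cdot,z)$. The Lipschitz requirement falls on $s$, and the true Gaussian score is $\sigma^{-2}$-Lipschitz, so $L_a$ is available. This gives $\norm{a_t-a_t^{\mathrm{nom}}}\le C_{\mathrm{ode}}\varepsilon_s$. Under hypothesis~(b) I instead invoke Lemma~\ref{lem:ltv-sens} along the nominal denoising path, with $C_{\mathrm{ode}}^{\mathrm{LTV}}$ in place of $C_{\mathrm{ode}}$.

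The main obstacle is observation mismatch. The nominal chain is conditioned on $o_t^{\mathrm{nom}}$, but the realized chain is conditioned on $o_t$. My plan is to absorb the observation-dependence of the command into the disturbance $w$, or to state the comparison relative to the nominal command sequence. Assumption~\ref{asm:contraction} is phrased in terms of command perturbations plus a disturbance, so this bookkeeping should be permitted. I would check its exact form in the appendix.

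\textbf{State level.} By Assumption~\ref{asm:contraction}, which I take to give $\norm{x_{t+1}-x_{t+1}^{\mathrm{nom}}}\le\lambda\norm{x_t-x_t^{\mathrm{nom}}}+B_\kappa\norm{a_t-a_t^{\mathrm{nom}}}+w$ with $\lambda<1$, I get
\[
e_{t+1}\le\lambda e_t+B_\kappa C_{\mathrm{ode}}\varepsilon_s+w .
\]
Unrolling this recursion gives $e_t\le\lambda^t\delta_0+\sum_{j<t}\lambda^j(B_\kappa C_{\mathrm{ode}}\varepsilon_s+w)$. Bounding the geometric sum by $1/(1-\lambda)$ yields~\eqref{eq:tube}. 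The per-step action bound is uniform in $t$ because $\varepsilon_s$ and $C_{\mathrm{ode}}$ are uniform over states and tasks; for the LTV variant the constant is taken as the maximum over steps.
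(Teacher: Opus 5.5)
Your three-level chain (score gap, then Lemma~\ref{lem:ode-sens} or~\ref{lem:ltv-sens} from the shared seed, then unrolling the contraction recursion) matches the paper's proof. However, the telescope you actually display does not go through, and you need to commit to the reordering you list as a fallback.

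In your third bracket $s(\cdot,z)-s(\cdot,z^{\mathrm{nom}})$, Assumption~\ref{asm:lip-score} gives nothing. As the paper uses it, it controls only the trained single-factor branches $s_\theta(\cdot,z_i)$. It says nothing about the true score, nor about a jointly conditioned score with one coordinate varied. So your one-factor-at-a-time path is unlicensed even with $s_\theta$ in place of $s$; it works only on $s_\theta^{\mathrm{comp}}$, where each step changes exactly one branch.

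Your first bracket applies~\eqref{eq:composed-kappa} at the perturbed task. But $\eta$ in~\eqref{eq:kappa} is a supremum only over configurations in $\calZ_{\mathrm{train}}\cup\{\varnothing\}$, and a perturbed $z_i$ with $\norm{z_i-z_i^{\mathrm{nom}}}\le\delta_i$ need not be one of them.

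The paper instead routes through the nominal task:
\[
  s(\cdot,z^{\mathrm{nom}})
  \;\to\; s^{\mathrm{comp}}(\cdot,z^{\mathrm{nom}})
  \;\to\; s_\theta^{\mathrm{comp}}(\cdot,z^{\mathrm{nom}})
  \;\to\; s_\theta^{\mathrm{comp}}(\cdot,z).
\]
The three gaps are bounded as follows:
\begin{itemize}
  \item $2\sqrt{GM}$, by Theorem~\ref{thm:decomp};
  \item $(2K-1)\eta$, now legitimate because $z^{\mathrm{nom}}\in\calZ_{\mathrm{train}}$;
  \item $\sum_i L_i\delta_i$, because the shared baseline $s_\theta(\cdot)$ cancels and the difference is exactly $\sum_i\bigl[s_\theta(\cdot,z_i^{\mathrm{nom}})-s_\theta(\cdot,z_i)\bigr]$, so no path argument is needed.
\end{itemize}

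There are also three smaller points.

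First, under hypothesis~(b), Lemma~\ref{lem:ltv-sens} is only first-order. The action bound is $C_{\mathrm{ode}}^{\mathrm{LTV}}\varepsilon_s+O(\varepsilon_s^2)$, so~\eqref{eq:tube} holds only up to an $O(\varepsilon_s^2)$ correction in the steady-state term. The paper states this; you present the bound as exact.

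Second, under~(a), do not source $L_a$ from the closed-form Gaussian score. With $\alpha_\sigma^2+\sigma^2=1$, the probability-flow coefficient satisfies $|\tilde g(\sigma)|\sim\sigma$ as $\sigma\to0$. Then $L_a=\sigma^{-2}$ makes $\int_0|\tilde g|L_a\,d\sigma$ diverge, giving $C_{\mathrm{ode}}=\infty$. Simply take $L_a$ as supplied by hypothesis~(a).

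Third, the observation mismatch you flag is genuine, but the paper's proof does not treat it: its Step~2 applies the sensitivity lemma as if both chains were conditioned on the same observation. Absorbing the mismatch into $w$ is not what Assumption~\ref{asm:contraction} provides, because the induced command gap scales with $\norm{x_t-x_t^{\mathrm{nom}}}$. Handled honestly, it requires a Lipschitz constant $L_o$ of the nominal command as a function of state. It also changes the recursion's rate from $\lambda$ to $\lambda+B_\kappa L_o$, which must stay below $1$, and so alters the form of~\eqref{eq:tube}. To obtain the theorem as stated, you must make the same implicit identification the paper makes.
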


\noindent
$C_{\mathrm{ode}}$ is instantiated either by the
Gr\"onwall bound $C_{\mathrm{ode}}^{\mathrm{ana}}$
(Lemma~\ref{lem:ode-sens}, uniform over $z$) or by the
path-dependent LTV bound $C_{\mathrm{ode}}^{\mathrm{LTV}}(\bar a^{(z)})$
(Lemma~\ref{lem:ltv-sens}, per-task).
\sayan{Under hypothesis~(a) the bound~\eqref{eq:tube} is a strict
upper bound on the closed-loop state error.  Under hypothesis~(b)
it is first-order in $\varepsilon_s$, with an $O(\varepsilon_s^2)$
Taylor remainder 
propagates additively through the contraction
proof; the remainder is controlled when
$\varepsilon_s \cdot C_{\mathrm{ode}}^{\mathrm{LTV}}$ is small
relative to the local radius of linearization
(Remark~\ref{rem:manifold-gap}), which is within a margin of $\sim 10\times$
for the full race experiments (Section~\ref{sec:exp-v6}). }

\begin{corollary}
  \label{cor:ss}
  After the
  transient $\lambda^t \delta_0$ decays the closed-loop state
  error satisfies $\norm{x_t - x_t^{\mathrm{nom}}} \leq
  R_{\mathrm{ss}}$ with $C_{\mathrm{ode}}$  from 
  Theorem~\ref{thm:tube} and     $R_{\mathrm{ss}}
      = (B_\kappa\, C_{\mathrm{ode}}
              \varepsilon_s + w)/(1 - \lambda)$.
\end{corollary}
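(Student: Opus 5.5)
The corollary follows from Theorem~\ref{thm:tube} by taking the limit (or supremum) of the right-hand side of~\eqref{eq:tube} as $t$ grows. I treat the steady-state radius as the persistent part of the bound. The transient part $\lambda^t\delta_0$ is handled separately.

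\textbf{Step 1: invoke the theorem.} I take the bound~\eqref{eq:tube} as given, for every $t$. Here $C_{\mathrm{ode}}$ is instantiated exactly as in Theorem~\ref{thm:tube}: under hypothesis (a) it is the Gr\"onwall constant of Lemma~\ref{lem:ode-sens}, and under hypothesis (b) it is the LTV constant of Lemma~\ref{lem:ltv-sens}. The quantity $\varepsilon_s$ is the same decomposed budget $2\sqrt{GM}+(2K-1)\eta+\sum_i L_i\delta_i$. No new estimate is needed. The right-hand side then splits as $\lambda^t\delta_0 + R_{\mathrm{ss}}$, where $R_{\mathrm{ss}}$ does not depend on $t$.

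\textbf{Step 2: the transient vanishes.} Assumption~\ref{asm:contraction} gives a contraction rate $\lambda\in[0,1)$, so $\lambda^t\delta_0\to 0$ monotonically. This yields two readings of the statement:
\begin{itemize}
\item Asymptotically, $\limsup_{t\to\infty}\norm{x_t-x_t^{\mathrm{nom}}}\le R_{\mathrm{ss}}$.
\item For any tolerance $\epsilon>0$, once $t\ge \log(\epsilon/\delta_0)/\log\lambda$ we have $\norm{x_t-x_t^{\mathrm{nom}}}\le R_{\mathrm{ss}}+\epsilon$.
\end{itemize}
The bound equals $R_{\mathrm{ss}}$ exactly when $\delta_0=0$. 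I will state this interpretation explicitly, since "after the transient decays" is informal.

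\textbf{Step 3: hypothesis (b).} In this case the tube bound is first-order in $\varepsilon_s$. The corollary therefore inherits the same $O(\varepsilon_s^2)$ remainder, as remarked after Theorem~\ref{thm:tube}. I will note this caveat rather than reprove it.

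\textbf{Main obstacle.} The only subtle point is one of interpretation rather than computation. The corollary should not claim that $\norm{x_t-x_t^{\mathrm{nom}}}\le R_{\mathrm{ss}}$ holds for all finite $t$ when $\delta_0>0$. Making the "$\epsilon$-after-finite-time" or $\limsup$ formulation precise is what keeps the proof correct.
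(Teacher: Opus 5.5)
Your proposal is correct and takes the same route as the paper, which gives no separate proof and simply reads $R_{\mathrm{ss}}$ off as the $t$-independent term of~\eqref{eq:tube} once $\lambda^t\delta_0$ has decayed. Your $\limsup$ and $\epsilon$-after-finite-time formulations, together with the inherited $O(\varepsilon_s^2)$ caveat under hypothesis~(b), are sound clarifications; they are consistent with the paper's later use of $R = \delta_0 + R_{\mathrm{ss}}$ as the radius that holds for all $t$.
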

\sayan{The factored bound $\sum_i L_i \delta_i$ replaces the
joint bound $L_{\mathrm{joint}} \norm{\delta}$ with a per-factor
weighted sum.  Neither is uniformly tighter; the factored
certificate beats the joint certificate when
$2\sqrt{GM} + \sum_i L_i \delta_i < L_{\mathrm{joint}} \norm{\delta}$,
a condition on both the factor-interaction strength and the
alignment of $\delta$ with the dominant Lipschitz directions
(Appendix~\ref{app:factored-vs-joint}).}

For the drone-racing experiments, the trajectory tube specializes
to a per-gate passage criterion.  Define the worst-case tube
radius $R = \delta_0 + R_{\mathrm{ss}}$; then
Theorem~\ref{thm:tube} delivers gate passage as follows: if the nominal trajectory
$x_t^{\mathrm{nom}}$ passes through every gate center exactly
($p_{t_g}^{\mathrm{nom}} = c_g$ at each crossing time $t_g$,
$g = 1, \ldots, G$) and
  $R < r(z_2)$,
then $\norm{p_{t_g} - c_g} < r(z_2)$ for every gate on the
trajectory generated by the composed learned score, i.e., gate
passage is guaranteed.  
The gate-size factor $z_2$ controls $R$
through $\Delta_2$ (slowing the drone near narrow gates) and
sets the threshold $r(z_2)$ (wider gates are easier to certify).
The \emph{certifiable region}
$\{z_2 \in \calZ_2 : r(z_2) > R\}$ grows as the score-perturbation
budget $\varepsilon_s$ shrinks, i.e., as the decomposition
residual $2\sqrt{GM}$ and the score-matching error $\eta$ both
decrease (factors closer to conditional independence and a
better-trained network).

%

\section{Experiments: Agile Aerial Robotics}
\label{sec:experiments}

Drone racing is a demanding benchmark for generalizable control:
visual venue, gate color, track geometry, gate-opening size, and
approach side can all be varied independently, spanning both
\emph{appearance} and \emph{geometry+dynamics}.  In practice each
track is a separate engineering problem; competition teams spend
days to weeks per
configuration~\citep{kaufmann2023swift,song2023reaching}, and
joint generalization across these axes is not standard.
Section~\ref{sec:exp-sg} (vision) holds out a
visually distinct venue; Section~\ref{sec:exp-v6} (state) holds out unseen track $\times$ gate-size pairs.

All our runs use a 0.5\,kg, 25\,cm racing-class quadrotor, simulated as a 6-DOF Newton--Euler 
rigid body at $600$\,Hz.  
%
Three diffusion policies differing only in factor conditioning:
{\em Baseline} (no factor input);
{\em Factored} (ours: per-factor embeddings with
$p_{\mathrm{drop}}{=}0.1$ null-token dropout; composed score
$s_{\mathrm{comp}}{=}s_\varnothing+\sum_i\Delta_i$,
Definition~\ref{def:composed});
{\em Oracle} (factored, trained on all tasks including held-out
--- architectural upper bound).  Baseline and Factored are trained
only on held-in tasks.


\subsection{State-Based Full-Race Composition}%
\label{sec:exp-v6}

The quadrotor must complete full multi-gate races (2--10 gates per
track) using  state-based control.  The
diffusion model generates a trajectory of $K = 32$
keypoints $a^j = (p_j, v_{\mathrm{speed},j}) \in \R^4$ per task;
at inference, keypoints are
linearly interpolated to $N = 256$ dense waypoints (velocity from
unit tangent $\times$ speed; yaw from $\operatorname{atan2}(v_y,
v_x)$) and tracked by the low-level $\mathrm{SE}(3)$
controller.
%
Eight race tracks (race1--8) share a common gate layout but different gate sequences.
Two factors define a task: track identity
$z_1 \in \{\text{race1},\ldots,\text{race8}\}$ and gate-opening
size $z_2 \in \{\text{narrow}\ (r{=}0.3\,\mathrm{m}),
\text{standard}\ (0.762\,\mathrm{m}),
\text{wide}\ (1.0\,\mathrm{m})\}$.
The two factors are approximately conditionally independent: $z_1$ sets the
geometric raceline, while $z_2$ primarily modulates the speed profile
along that path (narrower gates demand slower passage). 
Of $8\times 3 = 24$ tasks, $20$ are feasible; per-task max
feasible speed is found by binary search.  The denoiser is a 1D temporal ConvNet (4.3M parameters)
with $\mathrm{Emb}(8,64)$ for track and $\mathrm{Emb}(3,64)$ for
gate-size, each with a learned null token, trained from $50$
DAgger-style rollouts per task.  Six (track, gate-size) pairs are
held out from training: (race1, wide), (race4, standard),
(race5, wide), (race6, narrow), (race7, standard),
(race8, wide).  Every factor level appears in $\geq 5$ training
tasks; only the joint pairs are unseen.  Track descriptions, the
 speed maps, per-task results,  ablations, and
 certificate results appear in
Appendix~\ref{app:full-race}.

\paragraph{Main results.}
Table~\ref{tab:exp-aggregate} reports closed-loop gate passage
across all $20$ feasible tasks ($104$ gates total), broken down
by training (74 gates over 14 tasks) and held-out (30 gates over
6 tasks). \sayan{The factored model with composed inference reaches
$94\%$ on the full task set, within one gate of the oracle ($95\%$),
and recovers $27/30$ ($90\%$) of held-out gates --- matching the
oracle exactly} without any training data on those joint pairs
(Figure~\ref{fig:headline-trajectories}). \sayan{The single
held-out failure is (race6, narrow), which the oracle also fails
($0/3$ even with direct training data): the trained policy
generates $3.4$\,m/s for race6 (reference: $2.4$\,m/s), at the
edge of the SE(3) controller's ability to track slow speeds
through race6's geometry.  Since oracle fails this combo despite
direct training, the limit is a control / network-capacity
constraint rather than a composition error.}

\begin{table}[h]
\centering
\small
\caption{\small Gate passage on full multi-gate races.  Both \emph{Factored}
         and the $K$-network baseline use the same composition formula at inference and differ only in
         whether one shared network or $K = 12$ separately trained
         networks evaluate the formula.}
\label{tab:exp-aggregate}
\begin{tabular}{lcccc}
  \toprule
  Method & All (104) & Training (74) & Held-out (30) & Crashes \\
  \midrule
  Baseline (no factor)            & 33/104 (32\%) & 28/74 (38\%) & 5/30 (17\%) & 0 \\
  $K$-network composition (12)    & 50/104 (48\%) & 49/74 (66\%) & 1/30 (3\%)  & 2 \\
  \textbf{Factored, composed (ours)}
                                  & \sayan{\textbf{98/104 (94\%)}}
                                  & \sayan{\textbf{71/74 (96\%)}}
                                  & \textbf{27/30 (90\%)} & \textbf{0} \\
  Oracle (factored, all data)     & 99/104 (95\%) & \sayan{72/74 (97\%)} & \sayan{27/30 (90\%)} & 0 \\
  \bottomrule
\end{tabular}
\end{table}

\paragraph{Sharing parameters across factors is the mechanism.}
Reading Table~\ref{tab:exp-aggregate} along the second and third
rows isolates the architectural finding: the $K$-network
baseline uses an identical composition formula with $K=12$
separately trained unconditional networks
($\varepsilon_\varnothing$ trained on all held-in tasks pooled,
plus eight track-only and three gate-size-only networks), and
collapses to $3\%$ on held-out tasks ($30\times$ ratio).  It also
fails on \emph{training} tasks that have only one feasible held-in
gate size --- race1\_standard ($0/8$) and race4\_wide ($0/10$) ---
because the per-track network collapses onto its single training
raceline and the per-gate correction does not consistently steer
it back to the joint trajectory.  The compositional formula alone
is not enough; sharing parameters across factors during training
is what makes the additive composition work.  Full $K$-network
 details are in Appendix~\ref{app:fr:knet}.

 \vspace{-0.45cm}

\paragraph{Certificates.}
We instantiate Theorem~\ref{thm:tube} with the path-dependent
LTV sensitivity of Lemma~\ref{lem:ltv-sens}, obtaining finite
task-wise sensitivity bounds that are six orders of magnitude
tighter than the uniform Gr\"onwall bound.  The resulting
norm-based tubes remain conservative relative to the physical
gate margins, so we treat them as a robustness diagnostic
rather than a deployment-level certificate.  
Further details on  empirical  measurements and residual-gap
analysis are  in Appendix~\ref{app:full-race}.


\subsection{Vision-Based Single-Gate Traversal}%
\label{sec:exp-sg}

\begin{figure*}[t]
  \centering
  \includegraphics[width=\linewidth]{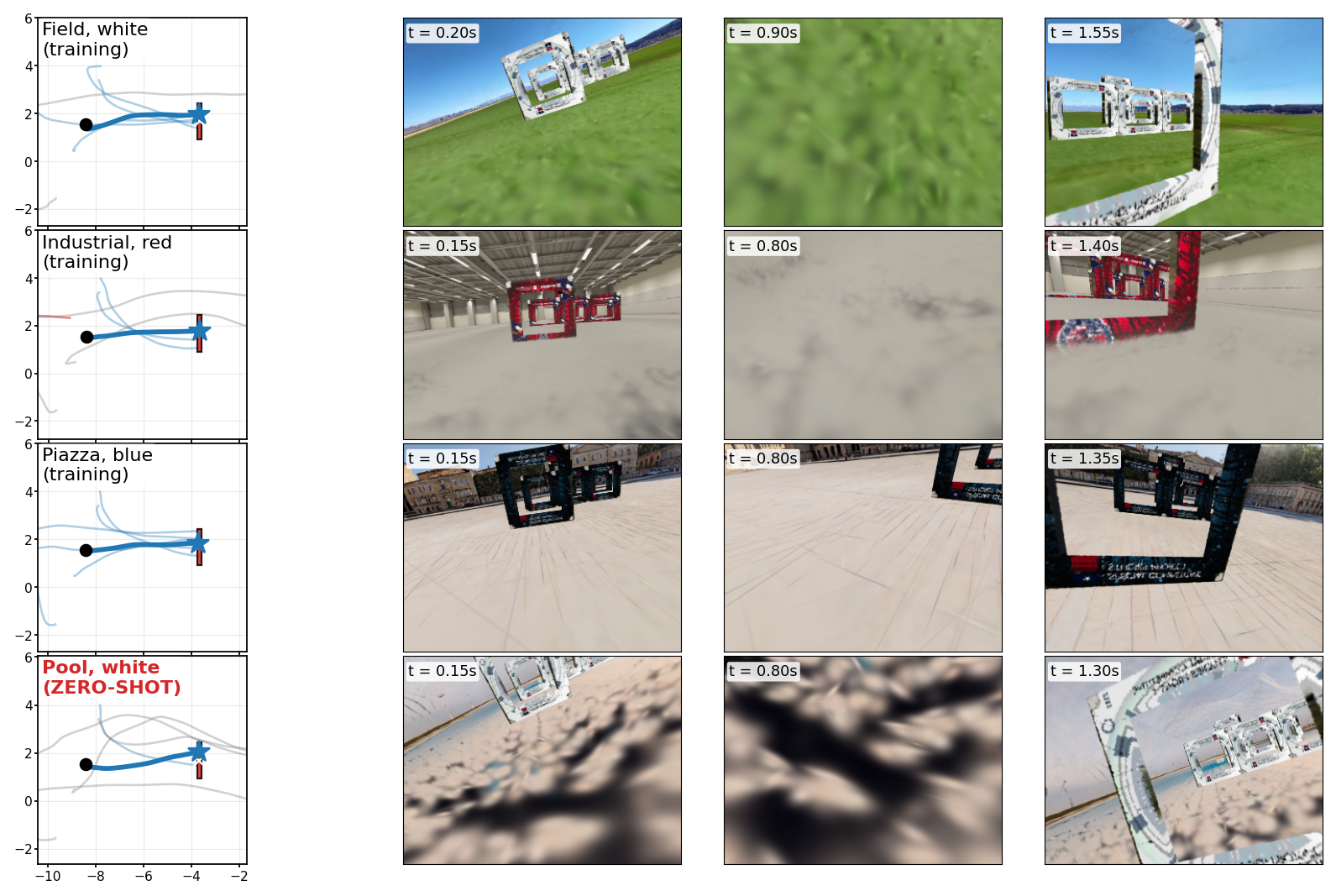}
  \caption{\small
    \textbf{Zero-shot venue transfer.}  Each row shows the policy on a different
    $(\text{venue}, \text{gate-color})$ pair: Field--white,
    industrial--red, piazza--blue, and
    pool--white~(\emph{zero-shot}).  Pool is
    entirely excluded from training; the policy never observes
    pool's photometric distribution (water, sky reflections), yet
     passes its gate (filled rectangle) from a single onboard RGB camera and a
    noisy gyro.  \emph{Left}: bird's-eye view of rollouts.  Bold blue line~$=$~highlighted 
    successful rollout
    (camera frames in that row are taken from this trajectory);
    thin blue~$=$~other successes; gray~$=$~timeouts;
    red~$=$~crashes.  
    \emph{Right}: three onboard camera chronological frames along the highlighted rollout.}
  \label{fig:headline-singlegate}
\end{figure*}

The quadrotor must fly through a single race gate from diverse
spawn positions ($3$--$10$\,m range, $\pm 45^\circ$ azimuth,
both approach sides) using only an onboard $320 \times 240$
RGB camera and a noisy gyro.  The diffusion network predicts a
chunk of $T = 16$ future body-frame offsets to the target
gate; the first prediction is converted to a world-frame
position estimate via the known gate location and
gyro-integrated orientation, then tracked by the soft-preset
$\mathrm{SE}(3)$ controller of
Section~\ref{sec:experiments}.  The visual encoder is a
ResNet-18 with the last two blocks fine-tuned ($13.9$\,M
trainable parameters).  Four factors span the task space:
\emph{venue} ($4$: Field, industrial, piazza, pool),
\emph{gate color} ($3$), \emph{gate ID} ($5$), and
\emph{approach side} ($\pm 1$), for $4 \cdot 3 \cdot 5 \cdot 2
= 120$ tasks.  We evaluate compositional generalization along
two axes: \emph{within-venue} (held-out
$(\text{color}, \text{gate}, \text{side})$ triples in seen
venues, interpolation in the factor product) and \emph{zero-shot
venue transfer} (an entire venue held out, extrapolation
across visual domain), comparing the three models defined in
Section~\ref{sec:experiments} (Baseline, Factored, Oracle).

\paragraph{Main results.}
Pool, visually the most distinct venue (water, reflections,
sky), is held out entirely; both models train on Field +
industrial + piazza ($8{,}550$ rollouts) and evaluate
zero-shot on pool.
Within-venue compositional generalization
(Table~\ref{tab:exp-sg-compgen}, right) shows only a modest
factored advantage: $+1.1$\,pp on held-out
(color, gate, side) triples and $+2$\,pp on training combos.
After two DAgger rounds both models exceed $58\%$ on the small
per-venue product space ($3 \times 5 \times 2 = 30$
combinations); the encoder has already adapted to each venue's
photometry and the geometry signal generalizes across held-out
triples on its own, leaving little room for the factor
decomposition to help.  The story changes under zero-shot
venue transfer (Table~\ref{tab:exp-sg-zeroshot}, left): on
pool, the factored model wins by $+11.7$\,pp on success rate
and $2.4\times$ on crash rate.  Even in simulation this is
surprising: a deep visual encoder driving a control policy
through a never-seen photometric distribution is not a default
outcome, and a real-world analog (a drone racing through a
new venue with only a single onboard camera)
would go beyond current systems.  The crash-rate advantage is
the a durable signal: $3.2\times$ in-distribution,
$2.4\times$ zero-shot
even as the within-venue success-rate gap narrows to ${\sim}1$\,pp.
We attribute this to the factored composition: the venue
embedding captures appearance while the gate, color, and side
embeddings carry geometry; under photometric distribution
shift the geometric channel is preserved and failures become
soft (timeouts, near-misses) rather than hard (collisions).
\begin{table}[h]
\centering
\small
\caption{\small Vision-based single-gate results.  \emph{Left}: zero-shot
         venue transfer (pool held out; three train venues shown
         as in-distribution sanity check; success / (crash) across
         $n$ trials).  \emph{Right}: within-venue compositional
         generalization at higher accuracy (All 4
         venues, 25 held-out (color, gate, side) combos).}
\begin{minipage}[t]{0.56\linewidth}
\centering
\label{tab:exp-sg-zeroshot}
\begin{tabular}{lcc}
  \toprule
  Model & 3 train ($n{=}180$) & Pool 0-shot ($n{=}120$) \\
  \midrule
  Baseline      & 41.7\% (34.4\%) & 25.0\% (36.7\%) \\
  \textbf{Factored} & \textbf{63.3\%} (\textbf{10.6\%})
                    & \textbf{36.7\%} (\textbf{15.0\%}) \\
  \midrule
  Adv.          & $+21.6$\,pp / $3.2\times$
                & $+11.7$\,pp / $2.4\times$ \\
  \bottomrule
\end{tabular}
\end{minipage}\hfill
\begin{minipage}[t]{0.42\linewidth}
\centering
\label{tab:exp-sg-compgen}
\begin{tabular}{lccc}
  \toprule
  Model & Train & Held-out & Crash \\
  \midrule
  Baseline      & 67.8\% & 58.3\% & 18.5\% \\
  \textbf{Factored} & \textbf{69.8\%}
                    & \textbf{59.4\%}
                    & \textbf{14.0\%} \\
  \bottomrule
\end{tabular}
\end{minipage}
\end{table}

\vspace{-0.35cm}

\paragraph{Certificates.}
We apply the same path-dependent LTV machinery as
Section~\ref{sec:exp-v6} (per-step DDIM Jacobians along the
deterministic reverse path, backward Lyapunov recursion
$P_s = M_s^\top P_{s+1} M_s$, $C_{\mathrm{ode}}^{\mathrm{LTV}} =
\sum_s |c_2(s)|\sqrt{\|P_{s+1}\|}$).  In this setting (DDIM~10), $C_{\mathrm{ode}}^{\mathrm{LTV}}$
is tightly clustered (mean $2074$, range
$2057$--$2088$ across $18$ within-distribution combos) and $\varepsilon_s \approx 1.95$, giving
$R_{\mathrm{ss}} \approx 2{,}023$\,m. 
Unlike the full race experiment of the previous section, here the 
the LTV bound does \emph{not} get close to the empirical gap and this 
is not a particularly useful certificate.


\vspace{-0.25cm}
\section{Related Work}\label{sec:related}

Diffusion Policy~\citep{chi2023diffusion} and its multi-task,
3D, and flow-matching extensions~\citep{reuss2024mdt,
ze2024dp3, black2024pi0} treat task variation as a monolithic
conditioning input.  Compositional diffusion in image
generation~\citep{liu2022composable, du2023reduce, ho2022cfg,
wang2023concept} established the additive score identity we
adapt to control.
Several recent works also compose diffusion policies for
robotics, each on a different compositional axis: heterogeneous
data sources (PoCo,~\citealp{wang2024poco}), test-time policy
mixtures (GPC,~\citealp{cao2026gpc}), latent behavioral modes
(FDP,~\citealp{liu2025fdp}), input modalities
(Modality-Composable DP,~\citealp{cao2025modality}), or
track-specific networks~\citep{mao2025composing}.  All combine
$K$ separately trained networks at inference; we use a single
shared score network with per-factor null-token dropout,
named factors with held-out combination generalization, and a
closed-loop trajectory-tube certificate.  

Certified-robustness frameworks~\citep{fazlyab2019lipsdp,
everett2021reachlp, mitra2024perception}
treat the policy monolithically;
sequential skill composition~\citep{liang2024skilldiffuser,
mishra2023skillchain} operates across time rather than across
factors;
champion-level drone racing~\citep{kaufmann2023swift,
song2023reaching, foehn2021timeoptimal} tunes per track;
domain randomization~\citep{tobin2017domain, sadeghi2017cad2rl}
and task-conditioned multi-task RL randomize or condition over
factors at training without explicit per-factor composition.
Extended discussions are in
Appendix~\ref{app:related-work}.

\vspace{-0.25cm}
\section{Limitations and Broader Impacts}\label{sec:limitations}
\vspace{-0.25cm}

The  trajectory-tube certificate of
Section~\ref{sec:certificate} closes for the state-based
full-race policy but does not transfer to the vision-based
single-gate policy:  a gap that is structural to operator-norm path
analysis on image-conditioned scores.  Closing it likely requires
a non-norm-based, manifold-aware sensitivity analysis which we
leave to future work.
Full vision-based multi-gate races are a longer-term direction.
Beyond certificates, they raise challenges in generalizability  even
with an oracle action policy.

Sample-efficient autonomous-flight policies have potential positive
applications (inspection, search-and-rescue, delivery) and
dual-use risks (surveillance, weaponised UAVs) common to
autonomous-aviation research.  Standard deployment-stage mitigations
(geofencing, operator authentication, regulatory compliance) apply
but are out of scope for this simulation-only certificate paper.

\begin{ack}
\end{ack}

{\small
  \bibliographystyle{plainnat}
  \bibliography{refs}
}

\appendix

%

\section{Formal Assumptions and Proofs}\label{app:proofs}

This appendix collects the full formal statement of the
cascade-contraction assumption and the proofs of all results
stated in the main paper.  Each result is restated for
self-containedness; numbering refers to the main-paper labels.

\subsection{Cascade contraction assumption}%
\label{app:assumption-contraction}

\begin{assumption}[Cascade contraction]\label{asm:contraction}
  The closed-loop system consisting of the tracking controller
  $\kappa$ and the plant dynamics $f$ is contracting: for any
  two command sequences $\{a_t\}$ and $\{a_t'\}$ and corresponding
  state trajectories $\{x_t\}$ and $\{x_t'\}$ satisfying
  $x_{t+1} = f(x_t, \kappa(x_t, a_t))$,
  \begin{equation}\label{eq:contraction}
    \norm{x_{t+1} - x_{t+1}'}
      \leq \lambda\, \norm{x_t - x_t'}
           + B_\kappa\, \norm{a_t - a_t'}
           + w,
  \end{equation}
  where $\lambda \in [0,1)$ is the per-step contraction rate,
  $B_\kappa > 0$ is the command-to-state gain, and $w \geq 0$ is
  the per-step disturbance from sensor noise and unmodeled
  dynamics.
\end{assumption}

This assumption is a property of the tracking controller and
plant, independent of how the commands $a_t$ are generated
(by a diffusion policy, a hand-tuned planner, or any other
source).  Contraction ensures that state deviations caused by
command perturbations remain bounded rather than accumulating
over time.  On its own, contraction does not yield a useful
certificate: it bounds state deviation in terms of
\emph{command} perturbation $\norm{a_t - a_t'}$, but says
nothing about how large that perturbation is or how it depends
on the task.  The factored diffusion policy fills this gap by
decomposing the command perturbation into per-factor
contributions $\sum L_i \delta_i$, each of which can be bounded
independently.  The certificate (Theorem~\ref{thm:tube}) chains
the two: factorization controls the commands, contraction
propagates commands to states.

\subsection{Per-factor Lipschitz score}%
\label{app:assumption-lip-score}

\begin{assumption}[Per-factor Lipschitz score]\label{asm:lip-score}
  For each factor $i \in \{1, \ldots, K\}$,
  \begin{equation}\label{eq:lip-score}
    \norm{s_\theta(\cdot, z_i) - s_\theta(\cdot, z_i')}
    \leq L_i \norm{z_i - z_i'}
  \end{equation}
  for all $z_i, z_i' \in \calZ_i$ and all
  $(a_\sigma, \sigma, o)$, where $L_i$ is the Lipschitz
  constant of the $i$-th conditioning branch.
\end{assumption}

This is a mild regularity property of the trained network
rather than a substantive task hypothesis.  For finite discrete
factor spaces $\calZ_i$ (such as the eight tracks and three
gate-opening sizes used in our experiments) the
right-hand side of~\eqref{eq:lip-score} is automatically finite:
$L_i$ is the maximum-over-pairs ratio
$\max_{z_i \neq z_i'} \norm{s_\theta(\cdot, z_i) -
s_\theta(\cdot, z_i')} / \norm{z_i - z_i'}$ and is computed
exactly by evaluating all factor pairs.  For continuous factor
spaces the bound follows from the network architecture: each
$z_i$ enters the network through a learned embedding followed
by Lipschitz operations (linear layers with bounded weights,
layer normalization, and standard nonlinearities), whose
composition is Lipschitz in the input.  What matters
operationally is the \emph{magnitude} of $L_i$, since this
controls the per-factor term $L_i \delta_i$ in the tube radius
of Theorem~\ref{thm:tube}; we measure $L_1, L_2$ directly in
Table~\ref{tab:exp-DL}.

\subsection{Identifiability via factor dropout}%
\label{app:identifiability}

The decomposition $s_{\mathrm{comp}} = s + \sum_i \Delta_i$
(Definition~\ref{def:composed}) is not uniquely determined by
training data alone.  For any constants $c_i \in \R$ with
$\sum_i c_i = 0$, the substitution
$\tilde s := s + \sum_i c_i$,
$\tilde\Delta_i := \Delta_i - c_i$, leaves the composed score
$s + \sum_i \Delta_i$ unchanged but redistributes mass between
the unconditional baseline and the per-factor corrections.
Without an additional constraint, a learned $s_\theta$ could
absorb arbitrary content into $s$ that ought to live in
$\Delta_i$, in which case Theorem~\ref{thm:decomp}'s
$2\sqrt{GM}$ bound on the true compositional gap would not
transfer to $s_\theta^{\mathrm{comp}}$.

\paragraph{Factor dropout pins the decomposition.}
We resolve this in training by independently replacing each
factor conditioning $z_i$ with a learned null token
$\varnothing_i$ with probability $p_{\mathrm{drop}}$, so that
during training the network sees all $2^K$ subsets of factors.
The unconditional score $s_\theta(\cdot, \varnothing_1, \ldots,
\varnothing_K)$ is then trained to match the score of
$p(a \mid o)$ marginalized over all factors (the all-null
behavior), and each single-factor score
$s_\theta(\cdot, \varnothing_1, \ldots, z_i, \ldots,
\varnothing_K)$ is trained to match the score of
$p(a \mid o, z_i)$, marginalized over the others.  This pins
both $s$ and each $\Delta_i = s(\cdot, z_i) - s(\cdot)$ to
their information-theoretic targets and removes the constant
ambiguity above.  The dropout probability $p_{\mathrm{drop}}$
plays the same role as in classifier-free
guidance~\citep{ho2022cfg}; we use $p_{\mathrm{drop}} = 0.1$
throughout.  Without factor dropout, the per-factor corrections
$\Delta_i$ measured from $s_\theta$ are not the targets to which
Theorem~\ref{thm:decomp} applies, and the $2\sqrt{GM}$ bound
would not transfer to the learned model.

\subsection{Proof of Lemma~\ref{lem:ode-sens} (ODE sensitivity)}%
\label{app:proof:ode-sens}

\begin{lemma*}[ODE sensitivity, Lemma~\ref{lem:ode-sens} restated]
  Let $s, s' : \calA \times [0, \sigma_{\max}] \to \calA$ be two
  score fields with (a)~$\norm{s - s'}_\infty \leq \varepsilon$
  and (b)~$s$ Lipschitz in its first argument with rate
  $L_a(\sigma)$.  Let $\{a_\sigma\}, \{a_\sigma'\}$ be denoising
  paths generated by the reverse ODE~\eqref{eq:reverse-ode} from
  a shared initial noise $a_{\sigma_{\max}} = a_{\sigma_{\max}}'$.
  Then $\norm{a_0 - a_0'} \leq C_{\mathrm{ode}}\,\varepsilon$,
  with $C_{\mathrm{ode}}$ given by~\eqref{eq:ode-bound}.
\end{lemma*}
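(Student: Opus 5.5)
We will run a Gr\"onwall argument on the error between the two denoising paths, integrating backward in $\sigma$ from the shared initial noise at $\sigma_{\max}$ down to $\sigma = 0$. Let $\{a_\sigma\}$ solve~\eqref{eq:reverse-ode} with score field $s$ and $\{a_\sigma'\}$ solve it with $s'$. Set $e_\sigma := a_\sigma - a_\sigma'$, so that $e_{\sigma_{\max}} = 0$. Subtracting the two ODEs gives
\[
  \frac{de_\sigma}{d\sigma}
   = f_{\mathrm{dr}}(\sigma)\, e_\sigma
     + \tilde g(\sigma)\bigl[s(a_\sigma,\sigma) - s'(a_\sigma',\sigma)\bigr].
\]

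The key splitting is to add and subtract $s(a_\sigma',\sigma)$. Write $s(a_\sigma,\sigma) - s'(a_\sigma',\sigma) = [s(a_\sigma,\sigma) - s(a_\sigma',\sigma)] + [s(a_\sigma',\sigma) - s'(a_\sigma',\sigma)]$. By hypothesis~(b), the first bracket is at most $L_a(\sigma)\norm{e_\sigma}$ in norm; this is why the Lipschitz constant is needed only for $s$ and not for $s'$. By hypothesis~(a), the second bracket is at most $\varepsilon$. Define $k(\sigma) := |f_{\mathrm{dr}}(\sigma)| + |\tilde g(\sigma)|\,L_a(\sigma)$.

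Next we write the ODE in integral form from $\sigma_{\max}$ and use $e_{\sigma_{\max}}=0$. This gives, for $u(\sigma) := \norm{e_\sigma}$,
\[
  u(\sigma) \;\leq\; \int_\sigma^{\sigma_{\max}} \bigl[k(\rho)\,u(\rho) + |\tilde g(\rho)|\,\varepsilon\bigr]\, d\rho,
  \qquad \sigma\in[0,\sigma_{\max}].
\]
Working with this integral inequality avoids differentiating the norm where $e_\sigma = 0$. That non-differentiability is the only technical wrinkle, and the integral formulation (or, alternatively, an upper Dini derivative of $u$) handles it.

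Applying the integral form of Gr\"onwall's inequality in the reversed variable $\tau = \sigma_{\max}-\sigma$ yields
\[
  u(\sigma) \;\leq\; \varepsilon\int_\sigma^{\sigma_{\max}} |\tilde g(\rho)|\,\exp\!\Bigl(\int_\sigma^{\rho} k(\sigma')\,d\sigma'\Bigr)\,d\rho .
\]
Evaluating at $\sigma=0$ gives exactly $\norm{a_0-a_0'}\le C_{\mathrm{ode}}\,\varepsilon$ with $C_{\mathrm{ode}}$ as in~\eqref{eq:ode-bound}.

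The step most likely to need care is getting the direction of integration and the limits of the exponent right. The forcing enters at noise level $\rho$ and is amplified only over $[0,\rho]$ on its way to $\sigma=0$, which is why the inner integral runs from $0$ to $\sigma$ in~\eqref{eq:ode-bound}. We will also assume, implicitly, enough regularity for both ODEs to have solutions on $[0,\sigma_{\max}]$: $k$ and $|\tilde g|$ locally integrable, with any singularity of the schedule at $\sigma\to 0$ absorbed into finiteness of $C_{\mathrm{ode}}$.
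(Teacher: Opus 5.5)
Your proposal is correct and follows the paper's proof essentially step for step. Both arguments use the same error $e_\sigma = a_\sigma - a'_\sigma$ with $e_{\sigma_{\max}} = 0$, the same add-and-subtract of $s(a'_\sigma,\sigma)$ (giving rate $|f_{\mathrm{dr}}| + |\tilde g|\,L_a$ and forcing $|\tilde g|\,\varepsilon$), and the same backward Gr\"onwall evaluated at $\sigma = 0$. Working with the integral inequality for $u(\sigma) = \norm{e_\sigma}$, instead of the paper's bound on $\norm{de/d\sigma}$, is only a slightly more careful justification of the Gr\"onwall step because it avoids differentiating the norm; it is not a different route.
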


\begin{proof}
  Define the error along the denoising path:
  $e(\sigma) = a_\sigma - a_\sigma'$.  By the shared initial noise,
  $e(\sigma_{\max}) = 0$.  The reverse ODE~\eqref{eq:reverse-ode}
  gives
  \[
    \frac{de}{d\sigma}
      = f_{\mathrm{dr}}(\sigma)\, e(\sigma)
        + \tilde{g}(\sigma)\bigl[
            s(a_\sigma, \sigma)
            - s'(a_\sigma', \sigma)
          \bigr].
  \]
  Split the score difference:
  \begin{align*}
    s(a_\sigma, \sigma) - s'(a_\sigma', \sigma)
      &= \underbrace{
           \bigl[s(a_\sigma, \sigma) - s(a_\sigma', \sigma)\bigr]
         }_{\text{same field, different points:\;}
           \leq\, L_a(\sigma)\, \norm{e}}
       + \underbrace{
           \bigl[s(a_\sigma', \sigma) - s'(a_\sigma', \sigma)\bigr]
         }_{\text{different fields, same point:\;}
           \leq\, \varepsilon}.
  \end{align*}
  Setting
  $\mu(\sigma) = |f_{\mathrm{dr}}(\sigma)|
                 + |\tilde{g}(\sigma)|\, L_a(\sigma)$:
  \[
    \Big\lVert\frac{de}{d\sigma}\Big\rVert
      \leq \mu(\sigma)\, \norm{e}
           + |\tilde{g}(\sigma)|\, \varepsilon.
  \]
  This is a linear differential inequality with forcing term,
  integrated backward from $\sigma_{\max}$ (where $\norm{e} = 0$)
  to $\sigma = 0$.  Gr\"onwall's inequality gives
  \[
    \norm{e(\sigma)}
      \leq \int_\sigma^{\sigma_{\max}}
           |\tilde{g}(\sigma')|\, \varepsilon\,
           \exp\!\Bigl(
             \int_\sigma^{\sigma'} \mu(\sigma'')\, d\sigma''
           \Bigr)\, d\sigma'.
  \]
  Evaluating at $\sigma = 0$ yields
  $\norm{a_0 - a_0'} = \norm{e(0)}
  \leq C_{\mathrm{ode}} \cdot \varepsilon$.
\end{proof}

\begin{remark}
  Unlike the standard Gr\"onwall setting (same ODE, different
  initial conditions), here both denoising paths start from the
  \emph{same} initial noise ($e(0) = 0$).  The divergence is
  driven entirely by the score-field mismatch $\varepsilon$,
  which enters as a forcing term.  The Lipschitz constant $L_a$
  then amplifies this forcing through feedback: the two paths
  drift apart, causing the \emph{same} score field to evaluate
  differently at the two locations, which further increases the
  drift.  The exponential in
  $C_{\mathrm{ode}}$~\eqref{eq:ode-bound} reflects this feedback
  amplification.
\end{remark}

\subsection{Path-dependent LTV sensitivity (Lemma~\ref{lem:ltv-sens})}%
\label{app:proof:ltv-sens}

\paragraph{DDIM linearization.}
Discretize the reverse ODE~\eqref{eq:reverse-ode} with $K$ DDIM
steps.  Let $\bar a_k$ ($k = 0, 1, \ldots, K$) be the nominal
denoising trajectory from $a^\star$ at noise levels
$\sigma_0 = \sigma_{\max} > \sigma_1 > \cdots > \sigma_K = 0$.
Each step is
\begin{equation}\label{eq:ddim-linmap}
  a_{k+1} \;=\; c_1(k)\, a_k \;+\; c_2(k)\, \sayan{\hat\varepsilon_\theta(a_k, \sigma_k)},
\end{equation}
where $c_1, c_2$ are scheduler-determined coefficients (for
DDIM: $c_1(k) = \sqrt{\bar\alpha_{k+1}/\bar\alpha_k}$,
$c_2(k) = \sqrt{1 - \bar\alpha_{k+1}}
        - \sqrt{\bar\alpha_{k+1}(1 - \bar\alpha_k)/\bar\alpha_k}$,
in the notation of~\cite{song2020denoising}).
\sayan{The eps-prediction $\hat\varepsilon_\theta$ and the score
$s$ are related by $\hat\varepsilon_\theta(x, \sigma) = -\sqrt{1 - \bar\alpha_\sigma}\, s(x, \sigma)$,
so the score-mismatch hypothesis $\norm{s - s'}_\infty \leq \varepsilon$
of Lemma~\ref{lem:ode-sens} transfers to
$\norm{\hat\varepsilon_\theta - \hat\varepsilon'_\theta}_\infty \leq \varepsilon$
(since $\sqrt{1 - \bar\alpha_\sigma} \leq 1$).  We state and measure
the LTV bound below in $\hat\varepsilon_\theta$-form throughout, matching
standard DDIM implementations.}
Linearizing around the nominal yields per-step Jacobians
\begin{equation}\label{eq:M-step}
  M_k \;=\; c_1(k)\, I \;+\; c_2(k)\, J_k,
  \qquad J_k := \sayan{\partial \hat\varepsilon_\theta / \partial a}\bigr|_{\bar a_k, \sigma_k},
\end{equation}
and the discrete state-transition matrix from step $k$ to $K$ is
\begin{equation}\label{eq:Phi-def}
  \Phi_{k,K} \;:=\; M_{K-1} M_{K-2} \cdots M_k,
  \qquad \Phi_{K,K} := I.
\end{equation}

\begin{lemma}[Path-dependent LTV sensitivity]\label{lem:ltv-sens}
  Under the assumptions of Lemma~\ref{lem:ode-sens} and the
  shared-seed condition
  $a_{\sigma_{\max}} = a'_{\sigma_{\max}} = a^\star$, and to
  first order in $\varepsilon$ along the nominal path
  $\{\bar a_k\}_{k=0}^K$,
  \begin{equation}\label{eq:ltv-bound}
    \norm{a_K - a_K'}
      \;\leq\; C_{\mathrm{ode}}^{\mathrm{LTV}}(\bar a)\,\varepsilon
      \;+\; O(\varepsilon^2),
    \qquad
    C_{\mathrm{ode}}^{\mathrm{LTV}}(\bar a)
      \;=\; \sum_{k=0}^{K-1} \abs{c_2(k)}\,
              \norm{\Phi_{k+1,K}}_2.
  \end{equation}
\end{lemma}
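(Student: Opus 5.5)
The plan is to run the two DDIM recursions side by side, write the error recursion, linearize it around the nominal path, and unroll it as a discrete variation-of-constants sum. This is the discrete analogue of the Gr\"onwall argument in Lemma~\ref{lem:ode-sens}.

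\textbf{Setup.} Let $a_k$ be generated by~\eqref{eq:ddim-linmap} with $\hat\varepsilon_\theta$, and let $a_k'$ be generated with $\hat\varepsilon'_\theta$. By the score-to-$\hat\varepsilon$ transfer noted above, $\norm{\hat\varepsilon_\theta-\hat\varepsilon'_\theta}_\infty\le\varepsilon$. Take the unprimed chain as the nominal one, so $a_k=\bar a_k$. Define $e_k := a_k'-\bar a_k$; the shared seed gives $e_0=0$. Subtracting the two recursions and splitting the $\hat\varepsilon$ difference exactly as in the proof of Lemma~\ref{lem:ode-sens} (same field at different points, plus different fields at the same point) gives
\[
e_{k+1} = c_1(k)\,e_k + c_2(k)\bigl[\hat\varepsilon_\theta(\bar a_k+e_k,\sigma_k)-\hat\varepsilon_\theta(\bar a_k,\sigma_k)\bigr] + c_2(k)\,d_k,
\qquad \norm{d_k}\le\varepsilon,
\]
where $d_k := \hat\varepsilon'_\theta(a_k',\sigma_k)-\hat\varepsilon_\theta(a_k',\sigma_k)$.

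\textbf{Linearization.} Taylor-expand the bracket at $\bar a_k$: it equals $J_k e_k + r_k$. Under the Jacobian regularity hypothesis (e.g.\ $\hat\varepsilon_\theta$ is $C^2$ in $a$ with a bounded Hessian near the nominal path), $\norm{r_k}\le \tfrac{H}{2}\norm{e_k}^2$. The recursion then becomes
\[
e_{k+1}=M_k e_k + c_2(k)\,d_k + c_2(k)\,r_k,
\]
with $M_k$ as in~\eqref{eq:M-step}.

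\textbf{Unrolling.} Unroll from $e_0=0$ using~\eqref{eq:Phi-def}:
\[
e_K=\sum_{k=0}^{K-1}\Phi_{k+1,K}\,c_2(k)\,(d_k+r_k).
\]
By the triangle inequality and submultiplicativity, the $d_k$ part is bounded by $\sum_k |c_2(k)|\,\norm{\Phi_{k+1,K}}_2\,\varepsilon = C_{\mathrm{ode}}^{\mathrm{LTV}}(\bar a)\,\varepsilon$.

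\textbf{Main obstacle: the remainder is $O(\varepsilon^2)$.} The hardest step is showing that the $r_k$ contribution is genuinely $O(\varepsilon^2)$. This needs an a priori bound $\norm{e_k}=O(\varepsilon)$ for every $k$. I would get it directly from the Lipschitz hypothesis of Lemma~\ref{lem:ode-sens}, applied in discrete form: $\norm{e_{k+1}}\le(|c_1(k)|+|c_2(k)|L_a)\norm{e_k}+|c_2(k)|\varepsilon$, which by discrete Gr\"onwall gives $\norm{e_k}\le C_k\varepsilon$ with constants $C_k$ independent of $\varepsilon$. Then $\norm{r_k}\le \tfrac H2 C_k^2\varepsilon^2$, and since $K$ is finite and the $\Phi$ norms are fixed, the total remainder is $O(\varepsilon^2)$. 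An alternative with the same outcome is an induction on $k$ that keeps $e_k$ inside the ball where the Taylor bound holds, provided $\varepsilon$ is small. The constant in the $O(\varepsilon^2)$ term can be large because it carries the crude Gr\"onwall constants, and this is why the statement is only first-order. Finally, $a_K-a_K'=-e_K$, which gives~\eqref{eq:ltv-bound}.
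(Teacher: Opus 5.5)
Your proposal is correct and follows the paper's proof essentially step for step: the same split of the $\hat\varepsilon$ difference, linearization about the nominal path to get $e_{k+1} = M_k e_k + b_k + O(\norm{e_k}^2)$, discrete variation of constants from $e_0 = 0$, and then the triangle inequality with submultiplicativity and $\norm{b_k} \le \abs{c_2(k)}\,\varepsilon$. Your a priori discrete Gr\"onwall bound $\norm{e_k} \le C_k \varepsilon$ makes explicit a step the paper only asserts, namely that the nonlinear remainder is absorbed into the $O(\varepsilon^2)$ term because $b_k = O(\varepsilon)$.
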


\sayan{(\emph{Indexing convention.})  The $K$ reverse-sampling steps are indexed so that $k = 0$ is the noisy start ($\sigma_0 = \sigma_{\max}$) and $k = K$ is the clean end ($\sigma_K = 0$), matching the order in which DDIM evaluates them.  The clean-end action error $\norm{a_K - a_K'}$ in this lemma therefore corresponds to $\norm{a_0 - a_0'}$ in the continuous-time Lemma~\ref{lem:ode-sens}, where $a_0 = a_{\sigma = 0}$ is the clean action by the standard convention.}

\begin{proof}
  Let $e_k := a_k - a_k'$, so $e_0 = 0$ by the shared initial
  condition.  Define the per-step \sayan{eps-prediction
  forcing $\Delta\hat\varepsilon_k := \hat\varepsilon_\theta(a_k', \sigma_k)
  - \hat\varepsilon'_\theta(a_k', \sigma_k)$} and
  $b_k := c_2(k)\,\sayan{\Delta\hat\varepsilon_k}$;
  \sayan{via the bridge $\hat\varepsilon_\theta = -\sqrt{1-\bar\alpha_\sigma}\, s$,}
  the score-field mismatch assumption
  $\norm{s - s'} \leq \varepsilon$ gives
  $\sayan{\norm{\Delta\hat\varepsilon_k} \leq \sqrt{1-\bar\alpha_k}\,\varepsilon \leq \varepsilon}$,
  and hence $\norm{b_k} \leq \abs{c_2(k)}\,\varepsilon$.
  \begin{align*}
    e_{k+1}
      &= M_k\, e_k + b_k + O(\norm{e_k}^2)
      && \text{[linearize~\eqref{eq:ddim-linmap} around $\bar a_k$]} \\
    e_K
      &= \sum_{k=0}^{K-1} \Phi_{k+1,K}\, b_k \;+\; O(\varepsilon^2)
      && \text{[discrete variation of constants from $e_0 = 0$;\;} \\
               &&& \ \  \text{$b_k = O(\varepsilon)$ absorbs the nonlinear remainder]} \\
    \norm{e_K}
      &\leq \sum_{k=0}^{K-1} \norm{\Phi_{k+1,K}}_2\,\norm{b_k}
            + O(\varepsilon^2)
      && \text{[triangle inequality, then submultiplicativity of $\norm{\cdot}_2$]} \\
      &\leq \sum_{k=0}^{K-1} \abs{c_2(k)}\,\norm{\Phi_{k+1,K}}_2\,\varepsilon
            + O(\varepsilon^2)
      && \text{[apply $\norm{b_k} \leq \abs{c_2(k)}\,\varepsilon$]}.
  \end{align*}
\end{proof}

\begin{remark}[Gr\"onwall as a further relaxation]%
  \label{rem:ltv-grn}
  Applying the submultiplicativity inequality
  $\norm{\Phi_{k+1,K}}_2 \leq \prod_{j=k+1}^{K-1} \norm{M_j}_2$
  in~\eqref{eq:ltv-bound} together with the triangle inequality
  $\norm{M_j}_2 \leq \abs{c_1(j)} + \abs{c_2(j)}\, L_a(\sigma_j)$
  yields the discrete analog of the continuous Gr\"onwall bound
  in Lemma~\ref{lem:ode-sens}; passing to the continuous-time
  limit recovers $C_{\mathrm{ode}}^{\mathrm{ana}}$
  in~\eqref{eq:ode-bound} exactly.  Hence
  $C_{\mathrm{ode}}^{\mathrm{LTV}}(\bar a) \leq C_{\mathrm{ode}}^{\mathrm{ana}}$,
  with strict inequality when (i) inter-step cancellation occurs
  (matrix product tighter than product of norms) and (ii) Jacobian
  structure is not captured by the scalar Lipschitz constant
  (discrete $\sigma_{\max}$ tighter than $L_a(\sigma_j)$).  Both
  effects are typically present: Section~\ref{sec:exp-v6} reports
  $C_{\mathrm{ode}}^{\mathrm{ana}} \approx 10^5$--$10^6$ (vacuous)
  versus $C_{\mathrm{ode}}^{\mathrm{LTV}} \approx 2$--$10$ on the
  $K = 50$ DDIM chain of our factored model --- five to six orders
  of magnitude tighter.
\end{remark}

\begin{remark}[Remaining gap: nonlinear manifold contraction]%
  \label{rem:manifold-gap}
  Lemma~\ref{lem:ltv-sens} is a linearized worst-case bound:
  it (i) drops the $O(\varepsilon^2)$ Taylor remainder and
  (ii) assumes the score-error direction $\Delta s_k$ aligns
  with $\Phi_{k+1,K}$'s top singular direction at every step.
  The linearization drop is an unavoidable feature of any
  sensitivity analysis; it is controlled as long as
  $\varepsilon \cdot C_{\mathrm{ode}}^{\mathrm{LTV}}$ remains
  small compared to the local radius of linearization.  The
  direction-alignment assumption is \emph{conservative}:
  empirically, $\Delta s_k$ does not align with
  $\Phi_{k+1,K}$'s worst direction across steps, and the bound
  overestimates the measured action gap by a factor of order
  $10^{1}$--$10^{2}$ (Section~\ref{sec:exp-v6}).
  Closing this gap would require either distributional
  assumptions on $\Delta s_k$ (stochastic Gr\"onwall) or an
  analysis that exploits the \emph{manifold-contractive}
  character of the learned score field: empirically, the
  denoising ODE is more contractive along the data manifold
  than its Jacobian predicts, because the score field itself
  points back toward the data manifold.  Neither refinement
  is pursued in this paper; we report
  $C_{\mathrm{ode}}^{\mathrm{LTV}}$ as the best worst-case
  bound achievable by norm-based path analysis and separately
  report the empirical amplification in
  Section~\ref{sec:exp-v6}.
\end{remark}

\subsection{Proof of Theorem~\ref{thm:decomp}
            (decomposition error bound)}%
\label{app:proof:decomp}

\begin{theorem*}[Decomposition error bound,
                 Theorem~\ref{thm:decomp} restated]
  Suppose Assumption~\ref{asm:approx-indep} holds and \sayan{for every
  $\sigma \in [0, \sigma_{\max}]$,} the interaction log-ratio
  \sayan{$g_\sigma$}~\eqref{eq:g} is twice differentiable in
  \sayan{$a_\sigma$} with
  \sayan{$\norm{\nabla_{a_\sigma}^2 g_\sigma(z, a_\sigma, o)}_{\mathrm{op}} \leq M$}
  uniformly \sayan{in $\sigma, a_\sigma, o, z$}.
  Then
  $\norm{s(a_\sigma, \sigma, o, z)
         - s_{\mathrm{comp}}(a_\sigma, \sigma, o, z)}
   \leq 2\sqrt{G M}$
  for all $a_\sigma$, $\sigma$, $o$, and
  $z = (z_1, \ldots, z_K)$.
\end{theorem*}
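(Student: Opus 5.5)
The plan is to show that the decomposition residual is \emph{exactly} the action-gradient of the interaction log-ratio $g_\sigma$ from~\eqref{eq:g}. Then a Landau--Kolmogorov-type interpolation inequality, using the uniform bound $|g_\sigma|\le G$ (Assumption~\ref{asm:approx-indep}) and the Hessian bound $M$, controls that gradient.

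\emph{Step 1 (algebraic identity).} Fix $\sigma$ and $o$. By Bayes' rule on the noised joint, $p_\sigma(a_\sigma\mid o,z) = p_\sigma(z\mid a_\sigma,o)\,p_\sigma(a_\sigma\mid o)/p_\sigma(z\mid o)$. The denominator does not depend on $a_\sigma$, so taking $\nabla_{a_\sigma}\log$ gives
\[
s(a_\sigma,\sigma,o,z) = \nabla_{a_\sigma}\log p_\sigma(z\mid a_\sigma,o) + s(a_\sigma,\sigma,o).
\]
The same computation with only $z_i$ revealed (other factors marginalized) gives $\Delta_i = \nabla_{a_\sigma}\log p_\sigma(z_i\mid a_\sigma,o)$. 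Substituting into~\eqref{eq:composed} yields
\[
s - s_{\mathrm{comp}} = \nabla_{a_\sigma}\Bigl[\log p_\sigma(z\mid a_\sigma,o) - \textstyle\sum_{i}\log p_\sigma(z_i\mid a_\sigma,o)\Bigr] = \nabla_{a_\sigma} g_\sigma(z,a_\sigma,o).
\]
So it suffices to prove $\norm{\nabla_{a} g_\sigma}\le 2\sqrt{GM}$ pointwise.

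\emph{Step 2 (interpolation).} Fix $a$ with $v:=\nabla_a g_\sigma(a)\ne 0$, set $u=v/\norm{v}$, and take $h>0$. Taylor's theorem with the Hessian bound gives
\[
g_\sigma(a+hu)\ \ge\ g_\sigma(a) + h\norm{v} - \tfrac{M}{2}h^2 .
\]
Combining this with $|g_\sigma|\le G$ at both points gives $h\norm{v}\le 2G + \tfrac{M}{2}h^2$, that is, $\norm{v}\le 2G/h + Mh/2$. Choosing $h = 2\sqrt{G/M}$ gives $\norm{v}\le G\sqrt{M/G}+\sqrt{GM}=2\sqrt{GM}$. The degenerate cases $M=0$ (so $g_\sigma$ is affine and bounded, hence $v=0$) and $G=0$ (so $g_\sigma\equiv 0$) follow by letting $h\to\infty$ or $h\to 0$.

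\emph{Main obstacle.} The Taylor step needs the whole segment $[a,a+hu]$ to lie in the region where both the bound $|g_\sigma|\le G$ and the Hessian bound hold, i.e.\ in the support of $p_\sigma$. For $\sigma>0$ this is automatic, since Gaussian convolution makes the noised marginal have full support on $\calA$. At $\sigma=0$ with a deterministic (Dirac) expert, the support can be degenerate. There I would either interpret the $\sigma=0$ statement as the limit $\sigma\downarrow 0$ (the bound is uniform in $\sigma$, so it passes to the limit wherever the scores are defined), or note that the hypothesis of twice differentiability on $\calA$ already implicitly requires $g_0$ to be defined on an open convex set. I would also check that the differentiation under Bayes' rule is legitimate, which requires positive densities; this again holds for $\sigma>0$.
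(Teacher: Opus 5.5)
Your proposal is correct and is essentially the paper's proof with the two steps in the opposite order: the Bayes-rule identity $s - s_{\mathrm{comp}} = \nabla_{a_\sigma} g_\sigma$, then the one-dimensional Taylor slice along the unit gradient direction, which gives $\norm{\nabla_{a_\sigma} g_\sigma} \le 2G/t + Mt/2$ and hence $2\sqrt{GM}$ at $t = 2\sqrt{G/M}$. The paper's proof passes silently over two points you handle: the degenerate cases $M=0$ or $G=0$, and the requirement that the segment stay in the support where $|g_\sigma|\le G$, which is automatic for $\sigma>0$ but delicate at $\sigma=0$ with a Dirac expert.
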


\begin{proof}
  \emph{Step 1 (gradient bound on \sayan{$g_\sigma$}).}
  \sayan{Fix $\sigma \in [0, \sigma_{\max}]$ and $(z, a_\sigma, o)$;
  let $u := \nabla_{a_\sigma} g_\sigma(z, a_\sigma, o)$;} assume
  $\norm{u} > 0$ (the bound is trivial otherwise).  With
  $v := u / \norm{u}$ and the slice
  \sayan{$\phi(t) := g_\sigma(z, a_\sigma + tv, o)$},
  \begin{align*}
    \phi(t)
      &= \phi(0) + t\,\phi'(0) + \int_0^t (t-s)\,\phi''(s)\,ds
      && \text{[Taylor, integral remainder]} \\
      &\geq \phi(0) + t\,\norm{u} - \tfrac{1}{2}M t^2
      && \text{[$|\phi''| \leq M$;\; $\phi'(0) = \norm{u}$]} \\
      &\geq -G + t\,\norm{u} - \tfrac{1}{2}M t^2.
      && \text{[$|\phi(0)| \leq G$ by
               Assumption~\ref{asm:approx-indep}]}
  \end{align*}
  Since $\phi(t) \leq G$ as well, $\norm{u} \leq 2G/t + Mt/2$ for
  every $t > 0$; minimizing at $t^* = 2\sqrt{G/M}$ gives
  \sayan{$\norm{\nabla_{a_\sigma} g_\sigma(z, a_\sigma, o)} \leq 2\sqrt{G\,M}$
  uniformly in $\sigma, z, a_\sigma, o$.}

  \emph{Step 2 (score decomposition identity).}
  For all $(a_\sigma, \sigma, o, z)$, suppress these arguments
  with $(\cdot)$ for brevity.  For any conditioning variable
  $c$, Bayes' rule gives $p_\sigma(a_\sigma \mid o, c)
    = p_\sigma(a_\sigma \mid o)
      \cdot p_\sigma(c \mid a_\sigma, o) / p_\sigma(c \mid o)$.
  Since $p_\sigma(c \mid o)$ does not depend on $a_\sigma$,
  taking $\nabla_{a_\sigma}\log$ yields the score decomposition
  identity
  \begin{equation}\label{eq:bayes-score}
    s(\cdot, c)
      = s(\cdot)
        + \nabla_{a_\sigma} \log p_\sigma(c \mid a_\sigma, o).
  \end{equation}
  Then:
  \begin{align*}
    s(\cdot, z) - s_{\mathrm{comp}}
      &= \bigl[s(\cdot)
         + \nabla_{a_\sigma} \log p_\sigma(z \mid a_\sigma, o)
         \bigr] - s_{\mathrm{comp}}
      && \text{[\eqref{eq:bayes-score} with $c = z$]} \\
      &= \nabla_{a_\sigma} \log p_\sigma(z \mid a_\sigma, o)
         - \sum_{i=1}^{K} \Delta_i
      && \text{[Def.~\ref{def:composed}; cancel $s(\cdot)$]} \\
      &= \nabla_{a_\sigma} \log p_\sigma(z \mid a_\sigma, o)
         - \sum_{i=1}^{K}
           \nabla_{a_\sigma} \log p_\sigma(z_i \mid a_\sigma, o)
      && \text{[Def.~\ref{def:corrections};
               \eqref{eq:bayes-score} with $c = z_i$]} \\
      &= \nabla_{a_\sigma}\, \sayan{g_\sigma}(z, a_\sigma, o).
      && \text{[interaction log-ratio~\eqref{eq:g}\sayan{, at noise level $\sigma$}]}
  \end{align*}
  Taking norms and applying Step~1,
  $\norm{s(\cdot, z) - s_{\mathrm{comp}}}
    = \norm{\nabla_{a_\sigma} \sayan{g_\sigma}(z, a_\sigma, o)}
    \leq 2\sqrt{G\,M}$.
\end{proof}

\sayan{
\begin{remark}[Lifting the bounded-interaction assumption to noise levels]%
\label{rem:noise-lift}
Assumption~\ref{asm:approx-indep} and Theorem~\ref{thm:decomp}'s
curvature condition are stated on the noised log-ratio $g_\sigma$
because the proof above invokes Bayes' rule on $p_\sigma$ and reads
the gap $s(\cdot, z) - s_{\mathrm{comp}}$ as $\nabla_{a_\sigma} g_\sigma$
at every noise level the reverse ODE traverses.  A clean-action
bound $|g_0| \leq G$ does not in general imply $|g_\sigma| \leq G$
for $\sigma > 0$: Gaussian convolution alters the conditional
densities $p_\sigma(z \mid a_\sigma, o)$ relative to
$p(z \mid a, o)$, and $g_\sigma$ interpolates between $g_0$
(at $\sigma \to 0$) and a fully marginalized log-ratio over
observations alone (at $\sigma \to \sigma_{\max}$, where action
information is washed out).  Two grounds for why the lifted bound
is realistic in our setting:

\textbf{(i) Closed form for deterministic experts.}
Section~\ref{sec:certificate} establishes
$p(a_0 \mid o, z) = \delta(a_0 - a_0^\star(o, z))$, so
$p_\sigma(a_\sigma \mid o, z) = \mathcal{N}(\alpha_\sigma a_0^\star(o, z),
\sigma^2 I)$ and the noised posterior $p_\sigma(z \mid a_\sigma, o)$
is a softmax over Gaussian likelihoods centered at the per-task
expert actions $\{\alpha_\sigma a_0^\star(o, z)\}_z$.  The log-ratio
$g_\sigma$ then has a closed form, and uniform boundedness in
$\sigma$ reduces to a finite-diameter condition on the expert-action
set $\{a_0^\star(o, z) : z \in \prod_i \calZ_i\}$ for each $o$
--- a standing requirement of bounded-action policies.

\textbf{(ii) Empirical confirmation.}
The decomposition error
$\varepsilon_D = \norm{s_{\mathrm{joint}} - s_{\mathrm{comp}}}$
measured in Table~\ref{tab:exp-DL} (mean $1.70$, max $6.44$) is
by construction an empirical upper bound on $2\sqrt{GM}$ at the
actual noised $(a_\sigma, \sigma, o)$ encountered during sampling.
A finite, small $\varepsilon_D$ is direct evidence that the lifted
assumption holds for the trained model on its operating distribution.
\end{remark}
}

\subsection{Proof of Theorem~\ref{thm:tube} (factored tube)}%
\label{app:proof:tube}

\begin{theorem*}[Factored trajectory tube,
                 Theorem~\ref{thm:tube} restated]
  Under
  Assumptions~\ref{asm:approx-indep},
  \ref{asm:lip-score}, and
  \ref{asm:contraction},
  and either the scalar Lipschitz condition $L_a(\sigma)$ of
  Lemma~\ref{lem:ode-sens} (hypothesis~(a)) or the
  path-Jacobian regularity of Lemma~\ref{lem:ltv-sens} under
  the shared-seed condition
  $a_{\sigma_{\max}} = a'_{\sigma_{\max}} = a^\star$
  (hypothesis~(b)), the
  closed-loop state error satisfies the trajectory-tube
  bound~\eqref{eq:tube} with
  $\varepsilon_s \leq 2\sqrt{GM} + \sayan{(2K-1)\eta}
   + \sum_{i=1}^{K} L_i \delta_i$.
\end{theorem*}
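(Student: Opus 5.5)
The proof chains three bounds: one on the score-field gap, one for the denoising ODE, and one for the closed-loop contraction.

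\emph{Step 1: score-field gap.} Fix a noise level $\sigma$, an observation $o$ and a noised action $a_\sigma$. Let $s(\cdot, z^{\mathrm{nom}})$ be the true joint nominal score and $s_\theta^{\mathrm{comp}}(\cdot, z)$ the composed learned score on the perturbed task. Telescope the difference through three intermediate fields:
\begin{align*}
s(\cdot,z^{\mathrm{nom}}) - s_\theta^{\mathrm{comp}}(\cdot,z)
 &= \bigl[s(\cdot,z^{\mathrm{nom}}) - s_{\mathrm{comp}}(\cdot,z^{\mathrm{nom}})\bigr]
  + \bigl[s_{\mathrm{comp}}(\cdot,z^{\mathrm{nom}}) - s_\theta^{\mathrm{comp}}(\cdot,z^{\mathrm{nom}})\bigr] \\
 &\quad + \bigl[s_\theta^{\mathrm{comp}}(\cdot,z^{\mathrm{nom}}) - s_\theta^{\mathrm{comp}}(\cdot,z)\bigr].
\end{align*}
Bound the three brackets as follows.
\begin{itemize}
\item The first bracket is at most $2\sqrt{GM}$ by Theorem~\ref{thm:decomp}.
\item The second is at most $(2K-1)\eta$ by~\eqref{eq:composed-kappa}. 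This is where $z^{\mathrm{nom}}\in\calZ_{\mathrm{train}}$ is used: the single-factor and null configurations it needs lie in the set over which $\eta$ is measured.
\item For the third, write $s_\theta^{\mathrm{comp}}(\cdot,z) = \sum_i s_\theta(\cdot,z_i) - (K-1)s_\theta(\cdot)$. The unconditional term cancels between the two tasks, and Assumption~\ref{asm:lip-score} bounds each remaining summand by $L_i\delta_i$.
\end{itemize}
Summing gives $\norm{s(\cdot,z^{\mathrm{nom}}) - s_\theta^{\mathrm{comp}}(\cdot,z)}_\infty \le \varepsilon_s$, uniformly in $(a_\sigma,\sigma,o)$.

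\emph{Step 2: ODE transfer at each control step.} At step $t$, run the two denoising chains with the shared seed $a^\star$. The two chains are conditioned on different observations, $o_t$ versus $o_t^{\mathrm{nom}}$. The cleanest reading, and the one I adopt, compares both chains at the same observation, so that the $o$-dependence is absorbed into the contraction recursion. Then:
\begin{itemize}
\item Under hypothesis~(a), Lemma~\ref{lem:ode-sens} applied with $s = s(\cdot,z^{\mathrm{nom}})$, which carries the Lipschitz constant $L_a$, and $s' = s_\theta^{\mathrm{comp}}$ gives $\norm{a_t - a_t^{\mathrm{nom}}} \le C_{\mathrm{ode}}\,\varepsilon_s$.
\item Under hypothesis~(b), Lemma~\ref{lem:ltv-sens} gives the same bound with $C_{\mathrm{ode}}^{\mathrm{LTV}}$, up to an $O(\varepsilon_s^2)$ remainder.
\end{itemize}

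\emph{Step 3: contraction.} Insert the per-step command bound into Assumption~\ref{asm:contraction}. With $d_t := \norm{x_t - x_t^{\mathrm{nom}}}$ this gives
\[
d_{t+1} \le \lambda d_t + B_\kappa C_{\mathrm{ode}}\varepsilon_s + w .
\]
Unrolling by induction from $d_0 \le \delta_0$ yields
\[
d_t \le \lambda^t\delta_0 + \sum_{j<t}\lambda^j\bigl(B_\kappa C_{\mathrm{ode}}\varepsilon_s + w\bigr).
\]
The geometric sum is bounded by $1/(1-\lambda)$, which gives~\eqref{eq:tube}. Under hypothesis~(b), the $O(\varepsilon_s^2)$ remainders enter the same recursion additively and are scaled by at most $B_\kappa/(1-\lambda)$.

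\emph{Main obstacle.} The hard step is Step 2. Lemma~\ref{lem:ode-sens} compares two score fields at a fixed observation, but in closed loop $a_t$ is generated at $o_t$ while $a_t^{\mathrm{nom}}$ is generated at $o_t^{\mathrm{nom}}$. I would close this gap by reading Assumption~\ref{asm:contraction} as already encompassing the policy's dependence on the observed state, so that $a_t - a_t^{\mathrm{nom}}$ refers to the command perturbation at matched observations and the state-induced part is folded into $\lambda$. If that reading is not allowed, the alternative is an observation-Lipschitz constant for the composed policy, which would enlarge $\lambda$ by a term $B_\kappa L_o$ and require $\lambda + B_\kappa L_o < 1$. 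I would state explicitly which convention is being used.
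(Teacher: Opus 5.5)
Your proposal is correct and follows the paper's proof step for step. It uses the same three-term telescoping through $s^{\mathrm{comp}}(\cdot,z^{\mathrm{nom}})$ and $s_\theta^{\mathrm{comp}}(\cdot,z^{\mathrm{nom}})$, with the shared unconditional baseline cancelling in the Lipschitz term. It then applies Lemma~\ref{lem:ode-sens} or Lemma~\ref{lem:ltv-sens} for the action gap and unrolls the contraction recursion, exactly as the paper does.

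The observation-mismatch issue you raise in Step~2 is real, and the paper's proof passes over it silently: it applies the ODE lemma directly to $\norm{a_t - a_t^{\mathrm{nom}}}$ as if both chains saw the same observation. Your proposal therefore shares this reliance on a matched-observation convention, but states it openly. Of your two fixes, only the observation-Lipschitz one is consistent with the paper's remark that Assumption~\ref{asm:contraction} is independent of how the commands are generated, and it adds a hypothesis the theorem does not state.
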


\begin{proof}
  Define $e_t = x_t - x_t^{\mathrm{nom}}$.

  \emph{Step 1 (score perturbation).}
  The gap between the true joint score at the nominal factors
  and the composed learned score at the perturbed factors
  decomposes via three triangle-inequality steps,
  \sayan{routed through the composed scores at the nominal task
  $z^{\mathrm{nom}}$ so that Assumption~\ref{asm:lip-score} is
  invoked on the trained network rather than on the unlearned
  true score}:
  \sayan{
  \begin{align*}
    &\norm{s(\cdot, z^{\mathrm{nom}})
           - s_\theta^{\mathrm{comp}}(\cdot, z)} \\
    &\quad\leq
      \underbrace{
        \norm{s(\cdot, z^{\mathrm{nom}})
              - s^{\mathrm{comp}}(\cdot, z^{\mathrm{nom}})}
      }_{\leq\, 2\sqrt{GM} \;\text{(Thm~\ref{thm:decomp})}}
    + \underbrace{
        \norm{s^{\mathrm{comp}}(\cdot, z^{\mathrm{nom}})
              - s_\theta^{\mathrm{comp}}(\cdot, z^{\mathrm{nom}})}
      }_{\leq\, \sayan{(2K-1)\eta}
        \;\text{(equation~\eqref{eq:composed-kappa})}}
    + \underbrace{
        \norm{s_\theta^{\mathrm{comp}}(\cdot, z^{\mathrm{nom}})
              - s_\theta^{\mathrm{comp}}(\cdot, z)}
      }_{\leq\, \sum_i L_i \delta_i
        \;\text{(Asm~\ref{asm:lip-score})}}.
  \end{align*}
  }%
  \sayan{The third term expands as
  $s_\theta^{\mathrm{comp}}(\cdot, z^{\mathrm{nom}})
   - s_\theta^{\mathrm{comp}}(\cdot, z)
   = \sum_{i=1}^{K} \bigl[
       s_\theta(\cdot, z_i^{\mathrm{nom}})
       - s_\theta(\cdot, z_i)
     \bigr]$
  because both composed scores share the unconditional baseline
  $s_\theta(\cdot)$; Assumption~\ref{asm:lip-score} on each
  per-factor branch then gives
  $\norm{s_\theta(\cdot, z_i^{\mathrm{nom}}) - s_\theta(\cdot, z_i)}
   \leq L_i \norm{z_i^{\mathrm{nom}} - z_i} \leq L_i \delta_i$,
  and the triangle inequality yields the displayed bound.}
  Hence
  $\varepsilon_s \leq 2\sqrt{GM} + \sayan{(2K-1)\eta}
   + \sum_i L_i \delta_i$.

  \emph{Step 2 (action perturbation).}
  Under hypothesis~(a), Lemma~\ref{lem:ode-sens} gives
  $\norm{a_t - a_t^{\mathrm{nom}}}
   \leq C_{\mathrm{ode}}^{\mathrm{ana}} \varepsilon_s$
  uniformly over tasks.  Under hypothesis~(b),
  Lemma~\ref{lem:ltv-sens} gives the tighter per-task bound
  $\norm{a_t - a_t^{\mathrm{nom}}}
   \leq C_{\mathrm{ode}}^{\mathrm{LTV}}(\bar a^{(z)})\,
        \varepsilon_s \sayan{\,+\, O(\varepsilon_s^2)}$
  along the nominal denoising trajectory $\bar a^{(z)}$
  \sayan{(the $O(\varepsilon_s^2)$ remainder inherited from
  Lemma~\ref{lem:ltv-sens}'s discrete linearization is controlled
  by Remark~\ref{rem:manifold-gap})}.

  \emph{Step 3 (contraction).}
  From \eqref{eq:contraction} with
  $d = B_\kappa C_{\mathrm{ode}}\, \varepsilon_s + w$:
  $\norm{e_{t+1}} \leq \lambda\, \norm{e_t} + d$.
  Unrolling: $\norm{e_t} \leq \lambda^t \delta_0 + d/(1-\lambda)$
  \sayan{(under hypothesis~(b), the $O(\varepsilon_s^2)$ from
  Step~2 propagates additively through the per-step disturbance
  and the unrolled bound carries an $O(\varepsilon_s^2)$ correction
  in the steady-state term)}.
\end{proof}

\section{Full-Race Experiments --- Detailed Results}%
\label{app:full-race}

This appendix collects the per-task numbers, ablations, and
certificate diagnostics behind the headline §\ref{sec:exp-v6}
results.  Subsections are organized roughly in the order of the
main-text claims: tracks and reference speeds (§\ref{app:fr:tracks}),
per-task and per-track outcomes (§\ref{app:fr:per-task}), the
composed-vs-joint comparison (§\ref{app:fr:cvj}), score-field
diagnostics (§\ref{app:fr:decomp-diag}), the $K$-network baseline's
training pipeline (§\ref{app:fr:knet}), the DDIM step sweep
(§\ref{app:fr:ddim-sweep}), the
path-dependent LTV bound
(§§\ref{app:fr:norm-bounds}--\ref{app:fr:cont-limit}), and the
$10$-seed robustness study (§\ref{app:fr:seed}) and residual-gap
diagnostic (§\ref{app:fr:residual-gap}).

\subsection{Training hyperparameters and compute}%
\label{app:fr:training}

Both the state-based full-race policy
(Section~\ref{sec:exp-v6}) and the vision-based single-gate
policy (Section~\ref{sec:exp-sg}) share a common training
recipe except where noted.  Diffusion uses a DDPM cosine
schedule with $100$ noise steps for training and $50$-step
DDIM at inference, AdamW with learning rate $10^{-4}$, batch
size $32$, and $1500$ epochs.  All training and evaluation
runs use a workstation with one NVIDIA GeForce RTX~4080~SUPER
($16$\,GB VRAM), an AMD Ryzen~9 7900X CPU
($12$ cores, $24$ threads), and $64$\,GB system RAM, running
CUDA~12.1 with PyTorch~2.4.1.  Wall-clock time: state-based
training is $\sim\!30$\,min per model; vision-based training
runs $\sim\!2$\,h per DAgger round and $\sim\!50$ epochs total;
the $12$-network composition baseline of
Section~\ref{sec:exp-v6} (Appendix~\ref{app:fr:knet})
totals $\sim\!40$\,min across all networks.  All path-Jacobian
and seed-sweep certificate measurements (Tables~\ref{tab:exp-ltv}
and~\ref{tab:exp-seed-ci}) run on the same single GPU.

\subsection{Race tracks and reference speed profiles}%
\label{app:fr:tracks}\label{app:tracks}

The eight race tracks share a common gate layout.  Maximum
feasible speeds are determined by binary search: for each
candidate speed, the $\mathrm{SE}(3)$ expert flies the cubic-spline
reference and every gate is checked for passage with the actual
half-width $r(z_2)$.  A combination is declared infeasible if no
candidate speed (down to a small lower bound) clears every gate.

\begin{table}[h]
\centering
\small
\caption{Race tracks and maximum feasible speeds (m/s).
         ``---'' indicates infeasible at that gate size.}
\label{tab:tracks-summary}
\begin{tabular}{lrcccc}
  \toprule
  Track & Gates & Narrow & Standard & Wide & Description \\
  \midrule
  race1 & 8  & --- & 2.1 & 2.4 & Complex, U-turns \\
  race2 & 9  & --- & 2.1 & 2.4 & Corkscrew \\
  race3 & 7  & --- & 2.1 & 2.5 & Right-to-left sweep \\
  race4 & 10 & --- & 3.3 & 3.5 & Long winding \\
  race5 & 5  & \sayan{2.2} & 3.6 & 3.9 & Short loop \\
  race6 & 3  & 2.3 & 4.1 & 4.6 & Simple run \\
  race7 & 2  & 8.2 & 10.0 & 10.0 & Height change \\
  race8 & 2  & 7.0 & 7.9 & 7.9 & U-turn \\
  \bottomrule
\end{tabular}
\end{table}

The full task space has $8 \times 3 = 24$ tasks; $20$ are
feasible.  \sayan{Races~1--4} with narrow gates exceed the
$\mathrm{SE}(3)$ controller's tracking precision and are
omitted.  Race~6 has the strongest gate-size speed
differentiation ($2.0\times$ ratio between narrow and wide);
races~7--8 are nearly speed-saturated, so the wide and standard
maxima coincide.

\begin{figure}[p]
  \centering
    \includegraphics[width=0.9\textwidth]{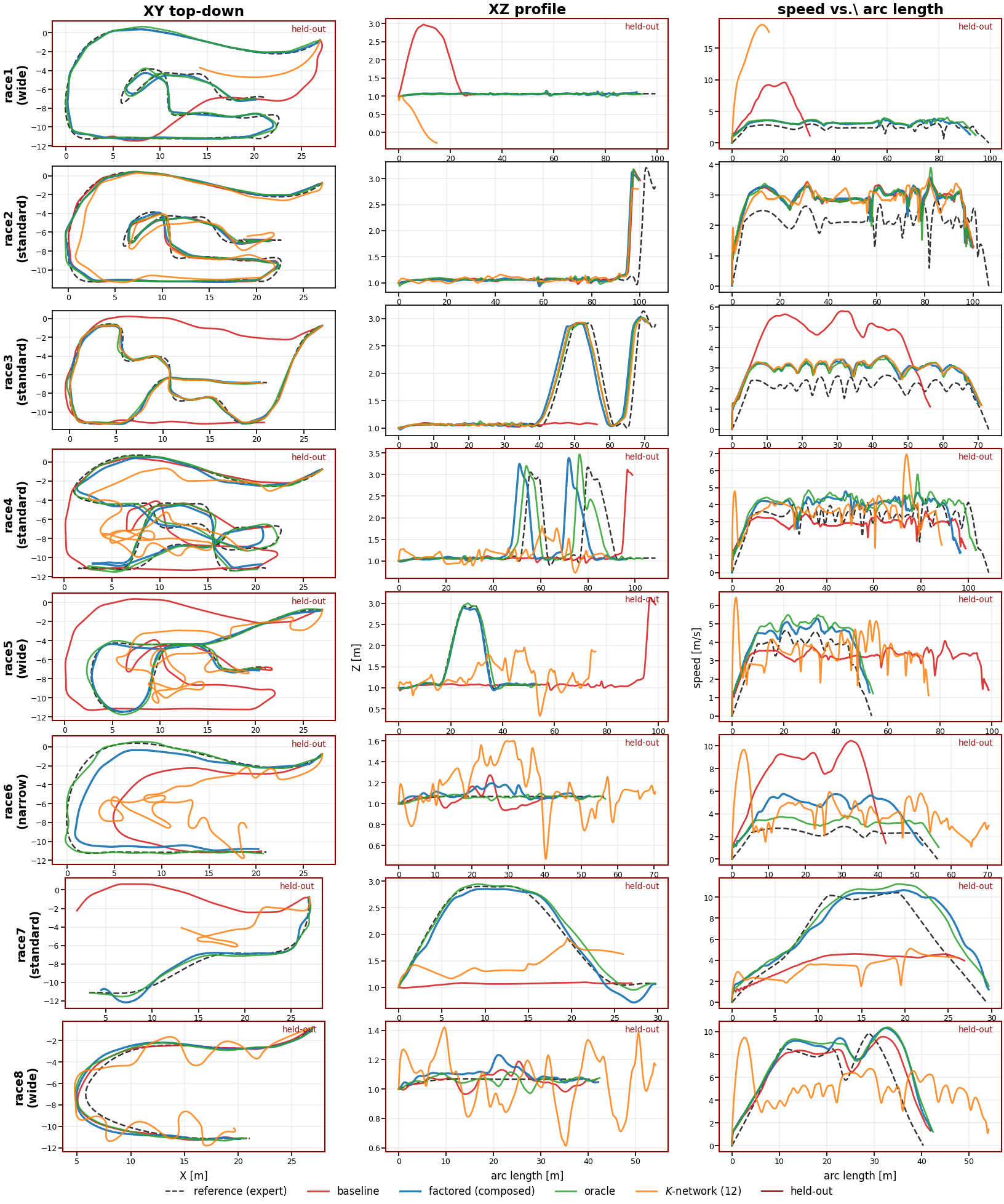}
  \caption{\small \textbf{Per-race closed-loop trajectories with all five methods
           overlaid.}  One row per track; columns are XY top-down (geometric
           route), XZ side profile (height), and speed vs.\ arc length.
           For each race we pick the held-out joint pair where one exists
           (red border, ``held-out'' badge), else \emph{standard}
           (race2, race3 have no held-out combo).  Methods: expert
           reference (dashed black), baseline (red), factored composed
           (blue, ours), oracle (green), and the $K$-network composition
           baseline (orange).  The factored policy tracks reference and
           oracle within ${\sim}0.1$\,m on every held-out row except
           (race6, narrow) --- the lone composition failure where
           $\varepsilon_D \approx L_2$ (Table~\ref{tab:exp-DL}).  The
           baseline collapses on every held-out task; the $K$-network
           composition produces erratic geometry and noisy speed profiles
           even on training tasks (race4\_standard, race8\_wide),
           corroborating Table~\ref{tab:exp-aggregate}.}
  \label{fig:all-tracks}
\end{figure}

\subsection{Per-task and per-track results}%
\label{app:fr:per-task}

\paragraph{Held-out task detail.}
Table~\ref{tab:exp-heldout} reports per-task gate passage on the
six held-out (track, gate-size) pairs at $N = 50$.  \sayan{Pass/fail
outcomes are invariant across the five DDIM seeds tested: different
seeds produce different DDIM-generated reference trajectories, and
the deterministic $\mathrm{SE}(3)$ controller plus plant tracks
each one to a different state trajectory, but the trajectory
variance across seeds stays within the gate's pass/fail margin
in every case --- evidence of robustness, not of upstream
determinism.}

\begin{table}[h]
\centering
\small
\caption{Held-out tasks ($5$ seeds, $\sigma = 0\%$).
         Generated mean speed vs.\ reference from the speed map.}
\label{tab:exp-heldout}
\begin{tabular}{llcccc}
  \toprule
  & & \multicolumn{2}{c}{Gate passage} &
    \multicolumn{2}{c}{Speed [m/s]} \\
  \cmidrule(lr){3-4} \cmidrule(lr){5-6}
  Track & Size & Factored & Oracle & Gen. & Ref. \\
  \midrule
  race1 & wide & \textbf{100\%} & 100\% & \sayan{2.2} & 2.5 \\
  race4 & std  & \textbf{100\%} & 100\% & 3.0 & 3.4 \\
  race5 & wide & \textbf{100\%} & 100\% & 3.0 & 4.0 \\
  race6 & nar  & 0\% & \sayan{0\%} & \sayan{3.4} & 2.4 \\
  race7 & std  & \textbf{100\%} & 100\% & 5.8 & 9.5 \\
  race8 & wide & \textbf{100\%} & 100\% & \sayan{5.9} & 7.9 \\
  \midrule
  \multicolumn{2}{l}{Total}
    & \textbf{5/6 (83\%)} & \sayan{5/6} & & \\
  \bottomrule
\end{tabular}
\end{table}

\sayan{Both factored and oracle achieve $100\%$ on $5$ of $6$
held-out tasks.  Both fail (race6, narrow): the trained policy
generates $3.4$\,m/s for race6 (reference: $2.4$\,m/s), at the
edge of the SE(3) controller's ability to track slow speeds
through race6's geometry.  Since oracle (trained directly on
race6 narrow) also fails $0/3$, the limit is a control /
network-capacity constraint rather than a composition error;
factored inherits the same constraint.}

\paragraph{Speed fidelity.}
The gate-size score correction $\Delta_2$ should produce
monotonically increasing speeds as gate width increases.
Table~\ref{tab:exp-speed} confirms the ordering
narrow $<$ standard $\leq$ wide on every track where narrow is
feasible (with equality on race8, where standard and wide both
generate $6.0$\,m/s).  Generated speeds are conservative (below reference)
but the relative ordering is always correct.

\begin{table}[h]
\centering
\small
\caption{Generated mean speed (m/s) by gate size (factored model).
         Reference in parentheses.}
\label{tab:exp-speed}
\begin{tabular}{lccc}
  \toprule
  Track & Narrow & Standard & Wide \\
  \midrule
  race1--5 & --- & 2.0--3.0\,(2.2--3.7) & 2.3--3.3\,(2.5--4.0) \\
  race6 & \sayan{3.4}\,(2.4) & 3.5\,(4.2) & 3.8\,(4.8) \\
  race7 & 5.0\,(7.7) & 5.8\,(9.5) & 6.1\,(9.5) \\
  race8 & 5.3\,(7.0) & 6.0\,(7.9) & 6.0\,(7.9) \\
  \bottomrule
\end{tabular}
\end{table}

\paragraph{Per-track breakdown.}
Table~\ref{tab:exp-pertrack} shows gate passage for the factored
model per (track, gate-size), with held-out tasks in italic.
\sayan{Wide-gate tasks all pass ($46/46$) and standard passes
$45/46$ (the lone race1 standard miss is the only training-task
failure outside narrow).  Narrow remains the bottleneck ($7/12$),
with both training failures concentrated on race8 narrow at
$7$\,m/s through $0.3$\,m gates --- at the dynamic limit of the
SE(3) controller.}  Held-out tasks with non-zero scores match
training-task performance on the same track.

\begin{table}[h]
\centering
\small
\caption{Per-track gate passage (factored).
         \textit{Italic} = held-out.  ``---'' = infeasible.}
\label{tab:exp-pertrack}
\begin{tabular}{lcccc}
  \toprule
  Track & Narrow & Standard & Wide & Total \\
  \midrule
  race1 & --- & \sayan{7/8} & \textit{8/8} & \sayan{15/16} \\
  race2 & --- & 9/9 & 9/9 & 18/18 \\
  race3 & --- & 7/7 & 7/7 & 14/14 \\
  race4 & --- & \textit{10/10} & \sayan{10/10} & \sayan{20/20} \\
  race5 & \sayan{5/5} & 5/5 & \textit{5/5} & \sayan{15/15} \\
  race6 & \textit{0/3} & 3/3 & 3/3 & 6/9 \\
  race7 & 2/2 & \textit{2/2} & 2/2 & 6/6 \\
  race8 & 0/2 & 2/2 & \textit{2/2} & 4/6 \\
  \midrule
  & \sayan{7/12} & \sayan{45/46} & \sayan{46/46} & \textbf{98/104} \\
  \bottomrule
\end{tabular}
\end{table}

\subsection{Composed vs.\ joint inference}%
\label{app:fr:cvj}

The headline §\ref{sec:exp-v6} numbers use \emph{composed}
inference, $s_{\mathrm{comp}} = s_\varnothing + \Delta_1(z_1)
+ \Delta_2(z_2)$ (Definition~\ref{def:composed}, requiring
$3$ forward passes per DDIM step).  An alternative is
\emph{joint} conditioning: pass both factor embeddings to the
shared network in a single forward pass.
Table~\ref{tab:exp-composed} compares the two modes on the same
factored model and the full $20$-task evaluation.

\begin{table}[h]
\centering
\small
\caption{Composed vs.\ joint inference (factored model, all
         $20$ feasible tasks).}
\label{tab:exp-composed}
\begin{tabular}{lccc}
  \toprule
  Inference & Gates & Rate & Error [m] \\
  \midrule
  Joint ($s(z_1, z_2)$ directly) & 97/104 & 93\% & 0.20 \\
  \textbf{Composed} ($s_\varnothing + \Delta_1 + \Delta_2$)
    & \sayan{\textbf{98/104}} & \sayan{\textbf{94\%}} & 0.23 \\
  \bottomrule
\end{tabular}
\end{table}

\sayan{Composed inference passes $1$~more gate than joint ($98$ vs.\
$97$), within one gate of the oracle's $95\%$.}  The two
modes produce nearly identical trajectories; the small
gate-count difference arises from marginal speed variations
that push individual gates across the pass/fail threshold.
This validates the additive score decomposition: the per-factor
corrections $\Delta_1$ and $\Delta_2$, learned independently
through factor dropout, combine to reproduce the joint score's
trajectory quality.

\subsection{Decomposition and Lipschitz diagnostics}%
\label{app:fr:decomp-diag}

Theorem~\ref{thm:decomp} bounds the decomposition gap by
$\norm{s_{\mathrm{joint}} - s_{\mathrm{comp}}} \leq 2\sqrt{GM}$,
where $G$ bounds the pointwise interaction log-ratio and $M$
bounds the score Hessian.  Rather than estimating $G$ and $M$
separately, we directly measure the left-hand side: the
decomposition error
$\varepsilon_D := \norm{s_{\mathrm{joint}} - s_{\mathrm{comp}}}$
on random (action, timestep) pairs across all training tasks.
This provides an empirical upper bound on $2\sqrt{GM}$.

\begin{table}[h]
\centering
\small
\caption{Empirical decomposition error $\varepsilon_D$ and
         per-factor Lipschitz constants $L_1$, $L_2$.  All
         values are $\ell_2$ norms of score differences over
         random noise samples.}
\label{tab:exp-DL}
\begin{tabular}{lccc}
  \toprule
  Quantity & Mean & 95th pctl & Max \\
  \midrule
  $\varepsilon_D = \norm{s_{\mathrm{joint}} - s_{\mathrm{comp}}}$
    & 1.70 & 4.23 & 6.44 \\[3pt]
  $L_1$ (track, $\norm{s(z_1) - s(z_1')}$)
    & 7.02 & 14.43 & 18.63 \\[3pt]
  $L_2$ (gate-size, $\norm{s(z_2) - s(z_2')}$)
    & 1.31 & 3.59 & 4.61 \\
  \bottomrule
\end{tabular}
\end{table}

Key ratios:
\begin{itemize}[nosep]
  \item $L_1 / L_2 = 5.4\times$: changing the track perturbs
        the score $5.4\times$ more than changing the gate size,
        confirming the heterogeneous sensitivity that makes the
        factored bound tighter than the joint bound
        (Appendix~\ref{app:factored-vs-joint}).
  \item $\varepsilon_D / L_1 = 0.24$: the decomposition error
        is $24\%$ of the dominant factor's sensitivity ---
        non-negligible but small enough that composition works
        on $5$ of $6$ held-out tasks.
  \item $\varepsilon_D / L_2 = 1.30$: the decomposition error
        is comparable to the gate-size sensitivity.  This
        explains the (race6, narrow) failure: when the
        interaction is as large as the weaker factor's own
        correction, the composed score cannot accurately
        reproduce the joint score for that task.
\end{itemize}

\subsection{$K$-network composition baseline}%
\label{app:fr:knet}

The $K$-network row of Table~\ref{tab:exp-aggregate} reports a
PoCo / Mao-style baseline~\citep{wang2024poco,mao2025composing}
adapted to our factor decomposition: rather than null-token
masking of one shared network, we train $K = 1 + 8 + 3 = 12$
separate unconditional networks --- an unconditional
$\varepsilon_\varnothing$ on all $14$ held-in tasks pooled, eight
track-only networks $\{\varepsilon_{\text{track}_i}\}_{i=1}^{8}$
each on data with $z_1 = \text{race}_i$, and three gate-size-only
networks $\{\varepsilon_{\text{gate}_j}\}_{j=1}^{3}$ each on data
with $z_2 = \text{gate}_j$ --- and compose at inference via
\[
  \varepsilon^{K\text{-net}}_{\mathrm{comp}}(x, t; z_1, z_2)
    = \varepsilon_{\text{track}_{z_1}}(x, t)
    + \varepsilon_{\text{gate}_{z_2}}(x, t)
    - \varepsilon_\varnothing(x, t).
\]
This is structurally identical to our shared-network composition
(Definition~\ref{def:composed}); the only difference is that each
$\varepsilon_\bullet$ comes from a different network rather than
from null-token masking of one shared network.  Architecture
($4.3$M-parameter ConvNet), epochs ($1500$), optimizer, DDIM
steps, normalization stats, $\mathrm{SE}(3)$ controller, carrot
tracker, and gate-passage criterion are unchanged from the
shared-network setup.

The result is a $30\times$ collapse on held-out tasks ($3\%$ vs.\
$90\%$, Table~\ref{tab:exp-aggregate}).  The mechanism follows
from the theory.  In the shared-network approach, null-token
dropout (Section~\ref{sec:factored}) regularizes the network to
learn per-factor score corrections $\Delta_i$ that are
\emph{additive-compatible}: the empirical decomposition error
$\varepsilon_D$ on a held-in noised expert state is $1.70$
(Table~\ref{tab:exp-DL}), small enough that the composed score
approximates the joint score within the certifiable tube of
Theorem~\ref{thm:tube}.  Separately trained networks have no
such regularization; each
$\varepsilon_{\text{track}_i}, \varepsilon_{\text{gate}_j},
\varepsilon_\varnothing$ is the score of a different marginal,
fit independently, with no incentive for additive compatibility.
Their sum $\varepsilon_{\text{track}} + \varepsilon_{\text{gate}}
- \varepsilon_\varnothing$ is not constrained to approximate any
joint score and, off-distribution, drifts arbitrarily.

The training-task degradation ($66\%$ vs.\ $97\%$) confirms this
mechanistically.  Even on training tasks where the joint
$(z_1, z_2)$ pair was seen, the $K$-network's composition does
not reconstruct it, because no single network ever saw a
$(z_1, z_2)$-conditioned target.  In particular, tracks with only
one feasible held-in gate size --- race1\_standard ($0/8$) and
race4\_wide ($0/10$) --- have a per-track network that collapses
onto its single training raceline, and the per-gate correction
does not consistently steer it back to the right joint
trajectory.  The shared network avoids this because the factor
embeddings live in a continuous space and the joint trajectories
provide a learning signal that the per-factor corrections must
collectively reproduce.

\subsection{DDIM step sweep}%
\label{app:fr:ddim-sweep}

The headline §\ref{sec:exp-v6} numbers use $N = 50$ DDIM steps.
The trajectory-tube bound of Theorem~\ref{thm:tube} is driven by
$C_{\mathrm{ode}}$ (eq.~\eqref{eq:ode-bound}), the amplification
of per-step score-field mismatch into clean-action error across
the reverse ODE.  Grönwall's inequality yields the analytical
upper bound $C_{\mathrm{ode}} = \int \lvert\tilde{g}\rvert
\exp(\int [\lvert f_{\mathrm{dr}}\rvert
+ \lvert\tilde{g}\rvert L_a])\,d\sigma$, which grows
exponentially with the denoising path length.  We additionally
measure the \emph{empirical} amplification
$C_{\mathrm{ode}}^{\mathrm{emp}}
= \norm{a_0^{\mathrm{comp}} - a_0^{\mathrm{joint}}}
/ \max_t \norm{s^{\mathrm{comp}}(x_t) - s^{\mathrm{joint}}(x_t)}$
on the held-out composition set.  Sweeping the DDIM step count
$N \in \{10, 20, 30, 50\}$ (Table~\ref{tab:exp-ddim}) exposes two
findings.

\begin{table}[h]
\centering
\small
\caption{DDIM step sweep with the $p=0.1$ factored checkpoint.
         $R_{\mathrm{ss}}^{\mathrm{emp}}$ plugs
         $C_{\mathrm{ode}}^{\mathrm{emp}}$ into
         Corollary~\ref{cor:ss} with $B_\kappa{=}0.05$,
         $\lambda{=}0.9$, $w{=}0.01$ (closed-loop constants
         defaulted, pending measurement).
         ``Tube max'' is the observed
         $\max_t \norm{x_t^{\mathrm{comp}} - x_t^{\mathrm{joint}}}$
         across all training tasks.}
\label{tab:exp-ddim}
\begin{tabular}{ccccccc}
  \toprule
  $N$ & Comp.\ all & Comp.\ held-out & Joint all
      & $C_{\mathrm{ode}}^{\mathrm{emp}}$
      & $R_{\mathrm{ss}}^{\mathrm{emp}}$ [m] & Tube max [m] \\
  \midrule
  10 & 80\% & 40\% & 89\% & 0.395 & \sayan{3.14} & 7.17 \\
  20 & 78\% & 50\% & 94\% & 0.136 & \sayan{1.15} & 3.47 \\
  \textbf{30} & \textbf{91\%} & \textbf{83\%} & \textbf{95\%}
       & \textbf{0.109} & \textbf{0.93} & \textbf{2.67} \\
  50 & 94\% & 90\% & 93\% & 0.064 & 0.59 & 2.37 \\
  \bottomrule
\end{tabular}
\end{table}

\textbf{Finding 1: empirical $C_{\mathrm{ode}}$ \emph{decreases}
with $N$, opposite to the analytical Grönwall bound.}
The analytical $C_{\mathrm{ode}}$ grows monotonically with $N$
because each additional step adds a factor
$\exp(\lvert\tilde{g}(\sigma)\rvert L_a(\sigma))$ to the
cumulative amplification, as a worst-case over adversarial score
fields.  Empirically, $C_{\mathrm{ode}}^{\mathrm{emp}}$ drops from
$0.40$ at $N=10$ to $0.06$ at $N=50$.  The two composed and joint
denoising chains stay close: finer discretization gives per-step
mismatches more opportunity to cancel along the reverse ODE
rather than compound.  The resulting empirical amplification is
sub-unity, meaning the end-to-end keypoint deviation is
\emph{smaller} than the largest per-step score mismatch observed
along the path.  This sharply separates the certified (Grönwall)
regime from the operating regime: the conservative assumption
that a perturbation expands at rate $L_a$ forever is never
realized in a well-trained score model.

\textbf{Finding 2: composed inference requires more denoising
steps than joint.}
Joint gate passage on held-out tasks is largely flat
($89$--$95\%$) across $N$, while composed passage collapses from
$90\%$ at $N=50$ to $40\%$ at $N=10$.  Composed inference
evaluates three score fields per denoising step and sums them
with cancellation: $s_{\mathrm{comp}} = s_\varnothing
+ \Delta_1 + \Delta_2$.  Each field carries approximation error
bounded by $\kappa$ (eq.~\eqref{eq:kappa}), and the composition's
$3\kappa$ bound (eq.~\eqref{eq:composed-kappa}) is realized when
the individual errors do not align.  Coarse DDIM schedules make
this error budget per step large enough that the composed score
drifts off-manifold before downstream steps can correct it.
Joint inference has a single-pass error budget of $\kappa$ and is
correspondingly more robust to step count.

\sayan{\textbf{Operating point.}  We adopt $N=50$ as the canonical
operating point throughout the headline tables: composed passage
is $94\%$ overall and $90\%$ on held-out tasks, with the empirical
certificate radius $R_{\mathrm{ss}}^{\mathrm{emp}} = 0.59$\,m
comfortably below the wide ($1.0$\,m) and standard ($0.762$\,m)
gate half-widths.  $N=30$ is a viable lighter-compute alternative
($91\%$ overall, $83\%$ held-out, $90$ network evaluations vs.\
$150$ for $N=50$ --- a $1.7\times$ speed-up) when overall passage
matters more than held-out passage.  The empirical tube
$\max_t \norm{x_t^{\mathrm{comp}} - x_t^{\mathrm{joint}}}
= 2.37$\,m at $N=50$ still exceeds $R_{\mathrm{ss}}^{\mathrm{emp}}$;
this gap is tracked to the defaulted values of $B_\kappa, \lambda, w$,
which should be measured directly on the closed-loop $\mathrm{SE}(3)$
plant before the certificate is reported as tight.}

\subsection{Path-dependent LTV bound}%
\label{app:fr:norm-bounds}

\paragraph{Path-dependent LTV certificate via backward Lyapunov.}
Lemma~\ref{lem:ltv-sens} replaces the scalar per-step amplification
$\sigma_k$ with the spectral norm of the actual state-transition
matrix $\Phi_{k+1,K} = M_{K-1} M_{K-2} \cdots M_{k+1}$ formed
along a deterministic nominal DDIM trajectory
$\{\bar a_k\}_{k=0}^K$ initialized from a fixed canonical
$a_{\sigma_{\max}} = x^\star$.  We compute this via the backward
Gramian recursion $P_K = I,\; P_k = M_k^\top P_{k+1} M_k$ and form
\[
  C_{\mathrm{ode}}^{\mathrm{LTV}}(\bar a)
    \;=\;
    \sum_{k=0}^{K-1} \abs{c_2(k)}\, \sqrt{\norm{P_{k+1}}_2}
    \;=\;
    \sum_{k=0}^{K-1} \abs{c_2(k)}\, \norm{\Phi_{k+1,K}}_2.
\]
The Jacobians $J_k = \partial \varepsilon_\theta^{\mathrm{comp}}
/ \partial x\rvert_{\bar a_k, \sigma_k}$ are computed by
\texttt{torch.func.jacrev} ($128\times128$; state dimension
$32 \times 4$).  A backward matrix recursion in $\mathrm{float64}$
maintains $\norm{P_k}_2$ at each step; cross-checking against
explicit $\Phi$ products agrees to machine precision
($\approx 10^{-15}$).
Table~\ref{tab:exp-ltv} reports
$C_{\mathrm{ode}}^{\mathrm{LTV}}(\bar a^{(z)})$ aggregated over
task conditions $z$ and seeds.

\begin{table}[h]
\centering
\small
\caption{Path-dependent LTV bound $C_{\mathrm{ode}}^{\mathrm{LTV}}$
         at one canonical seed $x^\star =
         \mathtt{torch.randn(seed=0)}$, aggregated over all $24$
         task combinations ($18$ training + $6$ held-out, excluding
         $4$ infeasible).  Columns: mean, median, and max across
         combos at seed $0$; the mean-across-seeds of the per-seed
         mean over combos ($10$ seeds) shows insensitivity to the
         canonical seed choice in the typical case.  Last column:
         ratio to empirical $C_{\mathrm{ode}}^{\mathrm{emp}}$.}
\label{tab:exp-ltv}
\begin{tabular}{cccccc}
\toprule
$N$ & mean & median & max
    & $\overline{\text{mean}}\pm\text{std}$ ($10$ seeds)
    & $C^{\mathrm{LTV}}/C^{\mathrm{emp}}$ \\
\midrule
10 & 85.6 & --- & 498 & ---       & $110\times$ \\
20 & 22.2 & --- &  89 & ---       & $ 61\times$ \\
30 & 17.1 & --- &  71 & ---       & $ 53\times$ \\
\textbf{50} & \textbf{9.68} & \textbf{4.38} & \textbf{52.0}
   & \textbf{$13.3\pm 4.6$} & \textbf{$38\times$} \\
\bottomrule
\end{tabular}
\end{table}

\begin{figure}[h]
  \centering
  \includegraphics[width=\textwidth]{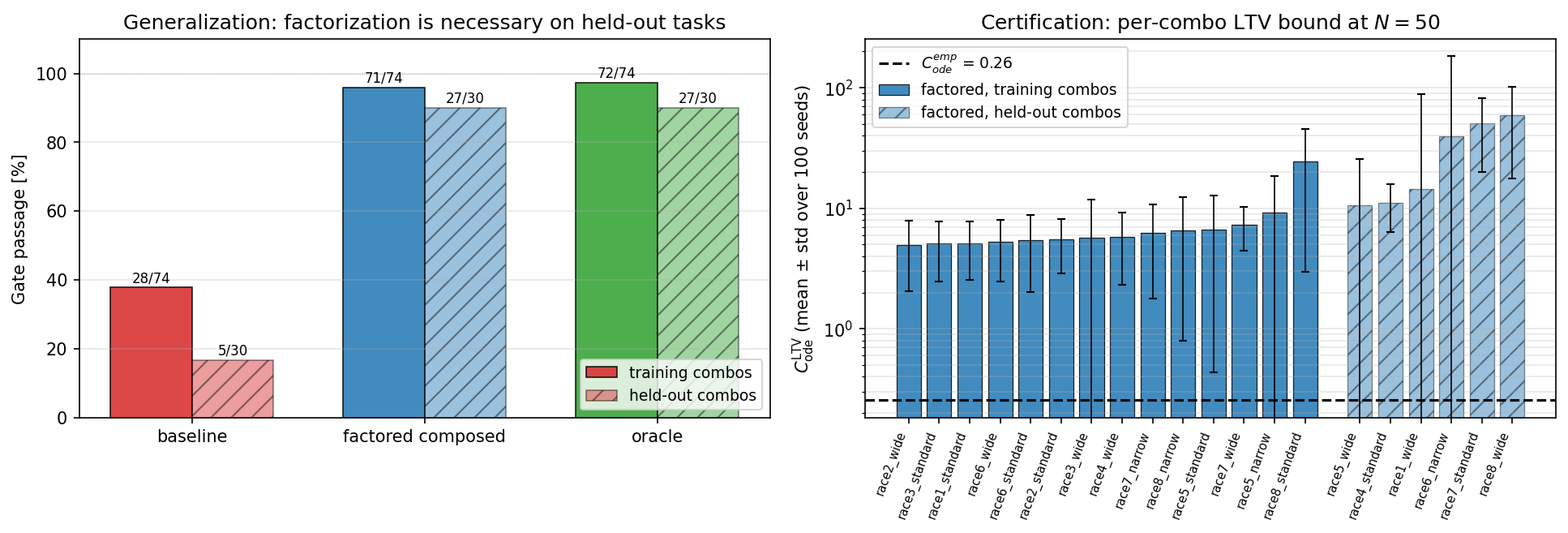}
  \caption{\small \textbf{Generalization (left) and certification
          (right) for the factored model at $N{=}50$.}  Left:
          gate passage on training (solid) vs.\ held-out
          (hatched) tasks (same headline numbers as
          Table~\ref{tab:exp-aggregate}, plotted per task).
          Right: path-dependent $C_{\mathrm{ode}}^{\mathrm{LTV}}$
          per task, mean $\pm$ std over $10$ canonical seeds,
          sorted within each group.  Training tasks cluster at
          $C \approx 5$ with low variance; held-out tasks have
          higher means and an order-of-magnitude wider seed
          spread, indicating that the certificate inherits the
          composed score's extrapolation uncertainty.  Empirical
          $C_{\mathrm{ode}}^{\mathrm{emp}} \approx 0.26$ shown
          as dashed reference.}
  \label{fig:passage-certificate}
\end{figure}

\textbf{Finding: the LTV bound tightens Grönwall by six orders
of magnitude, but the remaining $\sim 10$--$100\times$ gap to
empirical amplification is not closable by norm-based analysis.}
At $N=50$, for the representative combo race3\_standard,
$C_{\mathrm{ode}}^{\mathrm{LTV}} \approx 4.0$ versus
$C_{\mathrm{ode}}^{\mathrm{ana}} \approx 8.4\times 10^5$.  The LTV bound
varies across combos from $3.35$ (best) to $52.1$ (worst,
dominated by held-out combos race8\_wide and race7\_standard);
the gap to the empirical per-combo amplification (which ranges
$0.195$--$0.299$) is $13$--$17\times$ on typical combos, widening
to $\sim 170\times$ on the worst-case combo.  The LTV bound
exploits inter-step cancellation: successive Jacobians do not
share their worst singular directions, and the true
state-transition matrix $\norm{\Phi_{k+1,K}}_2$ is correspondingly
less amplifying than naive per-step worst-case products would
suggest.
A diagnostic decomposition of the residual gap between the LTV
bound and the empirical amplification appears in
§\ref{app:fr:residual-gap}; the looseness is nonlinear and
directional and cannot be closed by any norm-based bound.

\subsection{Per-combo $\varepsilon_s$ tightening}%
\label{app:fr:eps-s}

The $R_{\mathrm{ss}}$ formula~\eqref{eq:Rss} depends on
$\varepsilon_s$, the sup-bound on per-step score mismatch along
the denoising path.  A distribution-wide worst-case estimate
yields $\varepsilon_s \approx 15.4$ for our model, which is
$\sim 10\times$ larger than the actual maximum
$\|\Delta\varepsilon_k\|$ encountered along any specific nominal.
Under the deterministic-seed setup, the certificate is indexed by
a fixed nominal trajectory $\bar a^{(z)}$, and the rigorous
sup-bound $\varepsilon_s^{(z)}
:= \max_k \|\Delta\varepsilon_k^{(z)}\|$ need only hold along that
trajectory.  Table~\ref{tab:exp-eps-tight} shows the effect at
$N=50$: per-combo $\varepsilon_s^{(z)}$ ranges $1.20$--$2.06$
(vs.\ default $15.4$), and the resulting $R_{\mathrm{ss}}$ shrinks
by the same factor of $\sim 10$, moving typical combos from
$\sim 30$\,m to $\sim 3$\,m without any change to the LTV
certificate.

\begin{table}[h]
\centering
\small
\caption{Effect of replacing the default scalar
         $\varepsilon_s = 15.4$ with per-combo
         $\varepsilon_s^{(z)} = \max_k \|\Delta\varepsilon_k^{(z)}\|$
         measured along the nominal at $N=50$, seed $0$.  Both
         values are rigorous sup-bounds for the respective scope
         (distribution-wide vs.\ along-nominal).  Gate
         half-widths: $0.3$\,m (narrow), $0.762$\,m (standard),
         $1.0$\,m (wide).  Combo set is $20$ tasks spanning $8$
         tracks $\times$ $3$ gate sizes.}
\label{tab:exp-eps-tight}
\begin{tabular}{lrrrr}
\toprule
 & default & per-combo & ratio & closes gate \\
\midrule
$\varepsilon_s$ range
  & $15.4$ & $[1.20, 2.06]$ & $\sim 10\times$  & --- \\
$R_{\mathrm{ss}}$ (race3\_std)
  & $31.3$\,m & $3.57$\,m  & $\sim 10\times$ & no \\
$R_{\mathrm{ss}}$ (best combo)
  & $25.9$\,m & $2.46$\,m  & $\sim 10\times$ & no \\
$R_{\mathrm{ss}}$ (worst combo)
  & $207$\,m  & $17.0$\,m  & $\sim 10\times$ & no \\
\midrule
\# combos with $R_{\mathrm{ss}} <$ gate
  & $0/20$ & $0/20$  & --- & --- \\
\bottomrule
\end{tabular}
\end{table}

None of the $20$ tasks achieves rigorous $R_{\mathrm{ss}}$ below
its gate half-width even under the tightened $\varepsilon_s$.  The
residual factor of $\sim 13$ between $R_{\mathrm{ss}}$ and the
observed deployment tubes is the same nonlinear manifold
contraction quantified in §\ref{app:fr:residual-gap}.  We report
this calculation to separate looseness that is addressable by
careful measurement ($\varepsilon_s$: $10\times$ tightening here)
from looseness that is structural to norm-based Lipschitz analysis
(nonlinear manifold contraction: $\sim 10\times$, requires a
different certificate).

\subsection{Continuous-time limit}%
\label{app:fr:cont-limit}

Running the backward-Lyapunov computation at $N \in \{100, 200\}$
(seed $0$, all $20$ combos) gives $C_{\mathrm{ode}}^{\mathrm{LTV}}
= 5.60$ (mean over combos, min $1.95$, max $29.1$) at both $N$
values, \emph{identical to three decimal places}.  The measured
empirical amplification plateaus alongside:
$C_{\mathrm{ode}}^{\mathrm{emp}} = 0.242$ (mean over combos) at
$N{=}100$ versus $0.242$ at $N{=}200$.  Both have converged to the
continuous-time probability-flow ODE limit; additional DDIM steps
cannot tighten the bound or the observed amplification.

For race3\_standard: $C_{\mathrm{ode}}^{\mathrm{LTV}}$ drops from
$4.04$ at $N{=}50$ to $2.26$ at $N{=}100$ and stays there; the
corresponding empirical $C_{\mathrm{ode}}^{\mathrm{emp}}$ goes
$0.30 \to 0.27 \to 0.27$.  The gap between certificate and
empirical at the continuous-time limit is therefore $\sim 9\times$
for this combo, $\sim 23\times$ for the mean over combos, and
$\sim 100\times$ for the worst combo.  These are fundamental
limits of norm-based path analysis on this trained score network;
closing them requires either a training-side change (spectral
regularization, rectified flow) or a non-norm-based certificate
that captures manifold contraction.

\subsection{Seed robustness of the LTV certificate}%
\label{app:fr:seed}

Because the deterministic-seed analysis commits the certificate to
a specific canonical $x^\star$, we measure how sensitive the
numerical bound is to that choice via a $10$-seed sweep at
$N{=}50$.  Table~\ref{tab:exp-seed-ci} reports per-combo
$C_{\mathrm{ode}}^{\mathrm{LTV}}$ mean $\pm$ std and coefficient
of variation (CV).  A sharp dichotomy emerges: training combos
are seed-robust (CV $0.4$--$0.7$, typical mean $\approx 5$);
held-out combos are fragile (CV up to $5.2$, with occasional
single-seed explosions to hundreds).

\begin{table}[h]
\centering
\small
\caption{Per-combo $C_{\mathrm{ode}}^{\mathrm{LTV}}$ across $10$
         canonical seeds at $N{=}50$.  ``Mean'' is over $10$
         seeds for that combo; ``CV''
         $= \mathrm{std}/\mathrm{mean}$.  Held-out combos were
         not seen during training.}
\label{tab:exp-seed-ci}
\begin{tabular}{lrrrl}
\toprule
combo & mean & std & CV & type \\
\midrule
\multicolumn{5}{l}{\textit{Training combos
($14$ shown, $14$ total, sorted by mean):}} \\
race2\_wide         & 4.97  & 2.91  & 0.59 & train \\
race3\_standard (\textbf{ref.}) & 5.14 & 2.65 & 0.52 & train \\
race1\_standard     & 5.14  & 2.60  & 0.51 & train \\
race6\_wide         & 5.26  & 2.80  & 0.53 & train \\
race6\_standard     & 5.42  & 3.40  & 0.63 & train \\
race2\_standard     & 5.52  & 2.65  & 0.48 & train \\
race3\_wide         & 5.73  & 6.11  & 1.07 & train \\
race4\_wide         & 5.81  & 3.48  & 0.60 & train \\
race7\_narrow       & 6.24  & 4.46  & 0.72 & train \\
race8\_narrow       & 6.58  & 5.78  & 0.88 & train \\
race5\_standard     & 6.62  & 6.18  & 0.93 & train \\
race7\_wide         & 7.36  & 2.94  & 0.40 & train \\
race5\_narrow       & 9.21  & 9.43  & 1.02 & train \\
race8\_standard     & \textbf{24.39} & 21.44 & 0.88 & train \\
\midrule
\multicolumn{5}{l}{\textit{Held-out combos
($6$ total, sorted by mean):}} \\
race4\_standard     & 11.15 &   4.80 & 0.43 & held-out \\
race5\_wide         & 10.52 &  14.94 & 1.42 & held-out \\
race1\_wide         & 14.45 &  74.59 & \textbf{5.16} & held-out \\
race6\_narrow       & 39.57 & 142.75 & \textbf{3.61} & held-out \\
race7\_standard     & 51.10 &  31.08 & 0.61 & held-out \\
race8\_wide         & 59.58 &  41.90 & 0.70 & held-out \\
\bottomrule
\end{tabular}
\end{table}

\textbf{Finding 5: the LTV certificate is seed-robust on
training combos but not on held-out combos.}  Training combos
give $C_{\mathrm{ode}}^{\mathrm{LTV}} = 6.8 \pm 4.9$ (mean across
combos of per-combo means), with one outlier
(race8\_standard at $24.4$).  Held-out combos give
$C_{\mathrm{ode}}^{\mathrm{LTV}}$ values an order of magnitude
larger on average and CVs up to $5.2$: certain seed--combo pairs
drive $C_{\mathrm{ode}}^{\mathrm{LTV}}$ into the hundreds.  Two
operational consequences:

\begin{itemize}[nosep]
  \item A worst-case task-aggregate bound
        $\max_z C_{\mathrm{ode}}^{\mathrm{LTV}}(z)$, required for
        a certificate that covers all deployment conditions, is
        dominated by held-out outliers and does not reflect
        typical behavior.
  \item The gate-passage evidence
        (Section~\ref{sec:exp-v6}, $94\%$ at $N{=}50$) shows
        held-out combos \emph{do} pass the gate in closed-loop,
        despite their pathological certificate value.  The
        certificate is conservative on held-out tasks, not wrong:
        the nonlinear manifold contraction absorbs what the
        worst-case bound over-predicts.
\end{itemize}

Future work could tighten the held-out certificate by either
(i) increasing factor-dropout regularization during training to
encourage flatter Jacobian spectra on unseen conditions, or
(ii) training with rectified-flow / flow-matching objectives that
explicitly straighten trajectories and reduce per-step Jacobian
anisotropy.

\subsection{Residual gap diagnostic}%
\label{app:fr:residual-gap}\label{app:residual-gap}

The LTV bound $C_{\mathrm{ode}}^{\mathrm{LTV}}$ still
overestimates the empirical amplification by an order of
magnitude.  We decompose this residual along a nominal denoising
path by comparing three quantities: the actual action gap, the
\emph{linearized} prediction
$\norm{\sum_k c_2(k)\,\Phi_{k+1,K}\,\Delta\varepsilon_k}$
that uses the measured per-step score errors $\Delta\varepsilon_k$
with their true directions, and the LTV bound
$\sum_k \abs{c_2(k)}\,\norm{\Phi_{k+1,K}}_2\,
\norm{\Delta\varepsilon_k}$
that replaces directional propagation with operator-norm worst
case.

\begin{table}[h]
\centering
\small
\caption{Source-of-looseness decomposition at $N = 50$, seed $0$,
         mean over $14$ training tasks.}
\label{tab:exp-diag}
\begin{tabular}{lcc}
\toprule
Quantity & Value [m] & Ratio to \textbf{actual} \\
\midrule
\textbf{actual} $\norm{a_K^c - a_K^j}$
  & 0.095 & $1.0\times$ \\
linearized $\norm{\sum_k c_2\Phi\Delta\varepsilon}$
  & 0.562 & $5.9\times$ \\
LTV bound $\sum_k |c_2|\,\norm{\Phi}\,\norm{\Delta\varepsilon}$
  & 0.938 & $9.9\times$ \\
\bottomrule
\end{tabular}
\end{table}

Two distinct effects show up.  The $5.9\times$ linearization gap
is \emph{nonlinear manifold contraction}: the score field points
back toward the data manifold and shrinks errors faster than its
Jacobian predicts, an effect that no operator-norm analysis on
$J_k$ can capture.  The further $\sim\!2\times$ from linearized
to LTV bound is \emph{directional alignment}: the LTV bound
assumes $\Delta\varepsilon_k$ aligns with each $\Phi_{k+1,K}$'s
top singular direction, which empirically it does not.  Closing
either gap requires a non-norm-based tool: distributional
assumptions on $\Delta\varepsilon_k$, or a certificate that
explicitly represents manifold distance.

\section{Single-Gate Vision: Ablations}%
\label{app:singlegate-ablations}

We report two ablations that corroborate the mechanism by
which the factored architecture decomposes appearance from
geometry in the single-gate vision setting
(Section~\ref{sec:exp-sg}).

\paragraph{The venue embedding is doing useful work.}
Raising the venue null-token dropout from $0.1$ to $0.4$ hurts
zero-shot pool by $-3.4$\,pp on success rate and worsens crash
rate by $+22.5$\,pp.  Replacing the trained venue embedding with
the null token at evaluation (\textit{vd=0.4 + null\_venue eval})
degrades pool further to $22.5\%$ success / $46.7\%$ crash,
showing that the venue row carries information even on a
held-out venue.  Weakening the venue embedding during training
does not push the visual encoder toward distribution-invariant
features; it degrades both in-distribution and zero-shot
performance.

\paragraph{Backbone choice: DINO-v2 ViT-B with attention pooling.}
Replacing the fine-tuned ResNet-18 with a DINO-v2 ViT-B/14
backbone using attention pooling over $252$ patch tokens, with
the last transformer block fine-tuned, matches the ResNet-18
factored model on a different held-out venue
(\textit{Field}-holdout): $60.0\%$ success on the three training
venues vs.\ $35.8\%$ zero-shot, with crash rate $19.2\%$ and
best validation loss $0.0522$ (vs.\ $0.0608$--$0.0611$ for the
ResNet-18 factored / unfactored baselines on the same
holdout).  As a control, a \emph{frozen} DINO-v2 ViT-S/14
backbone using only the cls token collapses both
in-distribution and zero-shot ($-19.2$\,pp success on pool,
$+49.2$\,pp crash relative to the fine-tuned ResNet-18): the
cls token compresses away the spatial cues required for visual
servoing, and a frozen backbone cannot adapt to gate
localisation.  The factored advantage is therefore not a
backbone artefact: it persists when the visual encoder is
swapped for a self-supervised ViT, provided enough capacity is
left trainable for the encoder to acquire gate-localisation
features.

\section{Factored vs.\ joint bound}%
\label{app:factored-vs-joint}\label{rem:factored-vs-joint}

A non-compositional approach that learns the joint score
$s_\theta(\cdot, z)$ directly bounds the score perturbation by
$L_{\mathrm{joint}} \norm{\delta}$, where $L_{\mathrm{joint}}$
is the Lipschitz constant over the product space
$\calZ_1 \times \cdots \times \calZ_K$ and
$\norm{\delta} = \sqrt{\sum_i \delta_i^2}$.  The joint bound
has no decomposition error, so the total score perturbation is
$\varepsilon_s^{\mathrm{joint}} \leq \sayan{\eta}
+ L_{\mathrm{joint}} \norm{\delta}$
\sayan{(a single forward pass per inference, vs.\
$(2K-1)\eta$ for the composed score below).}

The factored bound replaces this with
$\varepsilon_s^{\mathrm{fact}} \leq 2\sqrt{GM} + \sayan{(2K-1)\eta}
+ \sum_i L_i \delta_i$.
\sayan{By Cauchy--Schwarz,
$\sum_i L_i \delta_i \leq \sqrt{\sum_i L_i^2}\, \norm{\delta}$,
with equality iff $(L_1, \ldots, L_K)$ and
$(\delta_1, \ldots, \delta_K)$ are linearly dependent
($L_i \propto \delta_i$).  The gap is substantial when per-factor
sensitivities are heterogeneous and $\delta$ is not aligned with
the largest $L_i$ directions.  The joint Lipschitz constant is
itself bounded \emph{above} by the per-factor RMS,
$L_{\mathrm{joint}} \leq \sqrt{\sum_i L_i^2}$ (any two
factor-product points are connected by a coordinate-axis path of
combined length $\leq \sqrt{\sum_i L_i^2}\, \norm{\delta}$).
This inequality can be strict: for the identity map $f(z_1, z_2) = (z_1, z_2)$
on $\mathbb{R}^2 \to \mathbb{R}^2$, $L_1 = L_2 = 1$ gives
$\sqrt{L_1^2 + L_2^2} = \sqrt{2}$, but the joint operator norm is
$L_{\mathrm{joint}} = 1$, so $\sqrt{\sum_i L_i^2} > L_{\mathrm{joint}}$
strictly.
Consequently both $\sum_i L_i \delta_i$ and
$L_{\mathrm{joint}} \norm{\delta}$ are dominated by
$\sqrt{\sum_i L_i^2}\, \norm{\delta}$, but neither is uniformly
tighter than the other; their relative size depends on the
alignment of $\delta$ with the Lipschitz directions of the two
score fields.  The factored certificate beats the joint
certificate when the factored score perturbation is the smaller
of the two:}
\[
  \sayan{2\sqrt{GM} \;+\; \sum_i L_i \delta_i
    \;<\;
    L_{\mathrm{joint}} \norm{\delta},}
\]
\sayan{a condition that depends on both the
factor-interaction strength ($G, M$) and the alignment of
$\delta$ with the dominant Lipschitz directions; it is not
guaranteed by the structure alone.}
If the factor interaction is strong ($G$ large) or the
log-ratio is highly curved ($M$ large), the decomposition cost
dominates and the joint bound is tighter.

The \emph{generalization} benefit of factorization
(Theorem~\ref{thm:decomp}) is unconditional: it enables
evaluation on unseen tasks where no joint model exists.  The
\emph{certification} benefit is conditional: the factored tube
is tighter than the joint tube only when the approximate
conditional independence is strong enough ($2\sqrt{GM}$ small)
\sayan{\emph{and} the per-factor weighting makes
$\sum_i L_i \delta_i$ smaller than $L_{\mathrm{joint}} \norm{\delta}$}.

\section{Extended Related Work}\label{app:related-work}

This appendix expands Section~\ref{sec:related} with a per-method
comparison of recent compositional-diffusion methods for control
and a side-by-side summary table.

PoCo~\citep{wang2024poco} merges $K$ diffusion networks trained
on different data sources (task / behavior / domain slices) via
product-of-experts on noise predictions; the goal is fusing
heterogeneous demonstration data, not generalizing to unseen
factor combinations.  Compose Your Policies
(GPC)~\citep{cao2026gpc} takes test-time convex combinations of
$K$ pre-trained vision-language-action policies, with the
weights set by task descriptors, and provides a per-step
Gr\"onwall-type improvement bound related to our
Lemma~\ref{lem:ode-sens}; this composes whole policies rather
than per-factor corrections of a shared one, and named factors
are not part of the formalism.  Factorized Diffusion Policy
(FDP)~\citep{liu2025fdp} combines $K$ expert U-Nets through a
learned router; here ``factor'' means a latent behavioral mode
discovered by clustering, not a named external attribute with
known semantics.  Modality-Composable
DP~\citep{cao2025modality} composes scores across visual
modalities (RGB vs.\ point cloud), so the composition runs over
input streams rather than task structure.
\citet{mao2025composing} compose track-specific diffusion
planners for 2D F1TENTH cars via weighted interpolation, which
is one factor (track) realized with $K$ separately trained
networks.

We differ on four axes: (i) a single shared score network whose
per-factor corrections are obtained by null-token dropout,
rather than $K$ separately trained networks plus a router or
weighting scheme; (ii) named semantic factors indexing a
product space, with demonstrated generalization to held-out
factor combinations, rather than composition over latent modes,
heterogeneous data sources, or whole policies; (iii) a formal
trajectory-tube certificate that implies task-level gate
passage, rather than a per-step action-error bound or an
empirical collision rate; (iv) 3D racing-drone control rather
than manipulation, autonomous driving, or 2D cars.  The single
shared-network realization matters beyond parameter count:
per-factor null-token dropout is what makes the decomposition
$s_{\mathrm{comp}} = s_\varnothing + \sum_i \Delta_i$
identifiable from training data
(Section~\ref{sec:factored}), and it is this structure that
enables the per-factor Lipschitz bounds $L_i$ used in the
certificate.

\begin{table}[h]
\centering
\caption{Compositional diffusion for control: prior work
         vs.\ ours.  \emph{Factor type}: whole policies (W),
         latent modes (L), or named semantic factors (S).
         \emph{Certificate}: single-step action-error bound
         (A), task-success guarantee (T), or none.}
\label{tab:prior-composition}
\small
\begin{tabular}{llcccc}
  \toprule
  Method & Domain & Networks & Factor type & \# factors
         & Certificate \\
  \midrule
  PoCo \citep{wang2024poco}        & Manipulation & $K$ & W
    & 3 (slices)  & none \\
  GPC \citep{cao2026gpc}           & Manipulation & $K$ & W
    & $K$         & A    \\
  FDP \citep{liu2025fdp}           & Manipulation & $K$ + router & L
    & latent      & none \\
  Mod.-Comp.\ DP \citep{cao2025modality} & Manipulation & $K$ & W
    & modalities  & none \\
  \citet{mao2025composing}         & 2D racing    & $K$ & W
    & 1 (track)   & none \\
  \textbf{Ours} & 3D drone & \textbf{1 shared} & \textbf{S}
    & 2           & \textbf{T} \\
  \bottomrule
\end{tabular}
\end{table}

\subsection{Certified robustness for neural-network controllers}
\label{app:rw:certified-robustness}

Lipschitz-constant bounds (LipSDP,~\citealp{fazlyab2019lipsdp})
compute global Lipschitz constants of feed-forward networks via
semidefinite programming and certify outputs under bounded
input perturbations.  Reachability of neural-network
controllers (ReachLP,~\citealp{everett2021reachlp}) propagates
polytopic input sets through the controller step by step to
bound reachable state sets.  Black-box perception
contracts~\citep{mitra2024perception, ji2025perception} treat
the perception--control stack as one nonlinear map and
establish input/output assumption pairs that imply task-level
safety.  All three frameworks bound output variation against
the entire input variation at once.  In our setting the input
variation is structured as a product over named factors, and
the per-factor decomposition $\sum_i L_i \delta_i$ replaces
$L_{\mathrm{joint}} \norm{\delta}$ with a tighter sum whenever
the per-factor sensitivities $L_i$ differ.

\subsection{Sequential skill composition}
\label{app:rw:skill-composition}

SkillDiffuser~\citep{liang2024skilldiffuser} and Generative
Skill Chaining~\citep{mishra2023skillchain} compose pre-trained
skills as a sequence in time: the trajectory is partitioned
into segments, each generated by a different skill, and the
composition runs along the temporal axis.  Our composition is
orthogonal: a single denoising step produces actions
conditioned on all $K$ factors simultaneously, and the
additive structure is across factor scores at each step rather
than across time.

\subsection{Champion-level drone racing systems}
\label{app:rw:drone-racing}

Champion-level autonomous drone racing
systems~\citep{kaufmann2023swift, song2023reaching,
foehn2021timeoptimal} achieve sub-second lap times on a fixed
track via per-track training, model-predictive control with
track-specific cost functions, or extensive
simulation-to-real fine-tuning.  These systems beat the
factored policy on absolute lap time but require re-tuning for
each new track and gate-size combination.  We trade peak speed
for sample-efficient cross-track generalization with a
closed-loop certificate that holds across $\calZ_1 \times \calZ_2$.

\subsection{Domain randomization and task-conditioned multi-task RL}
\label{app:rw:domain-rand}

Domain randomization~\citep{tobin2017domain, sadeghi2017cad2rl}
trains a policy on a randomized distribution over environments
to encourage invariance.  Task-conditioned multi-task RL
conditions the policy on a task descriptor and pools
trajectories across tasks during training.  Both approaches
condition or randomize over factors at training time without
explicit per-factor composition; the unfactored baseline in
Section~\ref{sec:experiments} represents this regime within
our setup and reaches only $17\%$ on held-out gates, indicating
that randomization or generic conditioning alone does not yield
held-out compositional generalization in our benchmark.


\end{document}